\title{Thompson Sampling for Budgeted Multi-armed Bandits}
\author{Yingce Xia$^{1,}$\thanks{This work was done when the first two authors were interns at Microsoft Research.}, Haifang Li$^{2,*}$, Tao Qin$^3$, Nenghai Yu$^1$ \and Tie-Yan Liu$^3$\\
	$^1$University of Science and Technology of China, Hefei, China\\
	$^2$ University of Chinese Academy of Sciences, Beijing, China\quad$^3$Microsoft Research, Beijing, China\\
	yingce.xia@gmail.com, lihaifang@amss.ac.cn, \{taoqin, tyliu\}@microsoft.com, ynh@ustc.edu.cn}
\begin{document}

\maketitle
\newtheorem{theorem}{Theorem}
\newtheorem{lemma}[theorem]{Lemma}
\newtheorem{fact}{Fact}
\newtheorem{definition}[theorem]{Definition}
\newtheorem{remark}[theorem]{Remark}

\def\QEDclosed{\mbox{\rule[0pt]{1.3ex}{1.3ex}}}
\def\QEDopen{{\setlength{\fboxsep}{0pt}\setlength{\fboxrule}{0.2pt}\fbox{\rule[0pt]{0pt}{1.3ex}\rule[0pt]{1.3ex}{0pt}}}}
\newenvironment{proof}[1][Proof.]{\begin{trivlist}
\item[\hskip \labelsep {\emph{#1}}]}{\end{trivlist}}
\newenvironment{proofsketch}[1][Proof sketch.]{\begin{trivlist}
\item[\hskip \labelsep {\bfseries #1}]}{\QEDclosed\end{trivlist}}

\newcommand{\myeqref}[1]{\eqref{#1}}
\newcommand{\suboptpull}[1]{n_{#1 , t(B)}}
\def\myexp{\mathbb{E}}
\def\myprob{\mathbb{P}}

\begin{abstract}
	Thompson sampling is one of the earliest randomized algorithms for multi-armed bandits (MAB). In this paper, we extend the Thompson sampling to Budgeted MAB, where there is random cost for pulling an arm and the total cost is constrained by a budget. We start with the case of Bernoulli bandits, in which the random rewards (costs) of an arm are independently sampled from a Bernoulli distribution. To implement the Thompson sampling algorithm in this case, at each round, we sample two numbers from the posterior distributions of the reward and cost for each arm, obtain their ratio, select the arm with the maximum ratio, and then update the posterior distributions. We prove that the distribution-dependent regret bound of this algorithm is $O(\ln B)$, where $B$ denotes the budget. By introducing a Bernoulli trial, we further extend this algorithm to the setting that the rewards (costs) are drawn from general distributions, and prove that its regret bound remains almost the same. Our simulation results demonstrate the effectiveness of the proposed algorithm.
\end{abstract}

\section{Introduction}\label{sec:intro}
The multi-armed bandit (MAB) problem, a classical sequential decision problem in an uncertain environment, has been widely studied in the literature \cite{lai1985asymptotically,UCB1}.  Many real world applications can be modeled as MAB problems, such as news recommendation \cite{li2010contextual} and channel allocation \cite{gai2010learning}. Previous studies on MAB can be classified into two categories: one focuses on designing  algorithms to find a policy that can maximize the cumulative expected reward, such as UCB1 \cite{UCB1}, UCB-V \cite{ucbv}, MOSS \cite{moss}, KL-UCB \cite{garivier2011kl} and Bayes-UCB \cite{kaufmann2012bayesian}; the other aims at studying the sample complexity to reach a specific accuracy, such as \cite{bubeck2009pure,yu2013sample}.

Recently, a new setting of MAB, called \emph{budgeted MAB}, was proposed to model some new Internet applications, including online bidding optimization in sponsored search \cite{175457xx,GBLS_long} and on-spot instance bidding in cloud computing \cite{agmon2013deconstructing,ardagna2011game}. In budgeted MAB, pulling an arm receives both a random reward and a random cost, drawn from some unknown distributions. The player can keep pulling the arms until he/she runs out of budget $B$. A few algorithms have been proposed to solve the Budgeted MAB problem.  For example, in \cite{epsilon_first},  an $\epsilon$-first algorithm was proposed which first spends $\epsilon B$ budget on pure explorations, and then keeps pulling the arm with the maximum empirical reward-to-cost ratio. It was proven that the $\epsilon$-first  algorithm has a regret bound of $O(B^{\frac{2}{3}})$. KUBE \cite{KUBE} is another algorithm for budgeted MAB, which solves an integer linear program at each round, and then converts the solution to the probability of each arm to be pulled at the next round. A limitation of the $\epsilon$-first and KUBE algorithms lies in that they assume the cost of each arm to be deterministic and fixed, which narrows their application scopes. In \cite{ding2013multi}, the setting was considered that the cost of each arm is drawn from an unknown discrete distribution and two algorithms UCB-BV$1$/BV$2$ were designed. A limitation of these algorithms is that they require additional information about the minimum expected cost of all the arms, which is not available in some applications.

Thompson sampling \cite{thompson1933likelihood} is one of the earliest randomized algorithms for MAB, whose main idea is to choose an arm according to its posterior probability to be the best arm. In recent years, quite a lot of studies have been conducted on Thompson sampling, and good performances have been achieved in practical applications \cite{chapelle2011empirical}. It is proved in \cite{kaufmann2012thompson} that Thompson sampling can reach the lower bound of regret given in \cite{lai1985asymptotically} for Bernoulli bandits. Furthermore, problem-independent regret bounds were derived in \cite{export:191857} for Thompson sampling with Beta and Gaussian priors.

Inspired by the success of Thompson sampling in classical MAB, two natural questions arise regarding its extension to budgeted MAB problems: (i) How can we adjust Thompson sampling so as to handle budgeted MAB problems? (ii) What is the performance of Thompson sampling in theory and in practice? In this paper, we try to provide answers to these two questions.

\emph{Algorithm}: We propose a refined Thompson sampling algorithm that can be used to solve the budgeted MAB problems. While the optimal policy for budgeted MAB could be very complex (budgeted MAB can be viewed as a stochastic version of the knapsack problem in which the value and weight of the items are both stochastic), we prove that, when the reward and cost per pulling are supported in $[0,1]$ and the budget is large, we can achieve the almost optimal reward by always pulling the optimal arm (associated with the maximum expected-reward-to-expected-cost ratio). With this guarantee, our proposed algorithm targets at pulling the optimal arm as frequently as possible. We start with Bernoulli bandits, in which the random rewards (costs) of an arm are independently sampled from a Bernoulli distribution. We design an algorithm which (1) uses beta distribution to model the priors of the expected reward and cost of each arm, and (2) at each round, samples two numbers from the posterior distributions of the reward and cost for each the arm, obtains their ratio, selects the arm with the maximum ratio, and then updates the posterior distributions. We further extend this algorithm to the setting that the rewards (costs) are drawn from general distributions by introducing Bernoulli trials. 

\emph{Theoretical analysis}: We prove that our proposed algorithm can achieve a distribution-dependent regret bound of $O(\ln B)$, with a tighter constant before $\ln B$ than existing algorithms (e.g., the two algorithms in \cite{ding2013multi}). To obtain this regret bound, we first show that it suffices to bound the expected pulling times of all the suboptimal arms (whose expected-reward-to-expected-cost ratios are not maximum). To this end, for each suboptimal arm, we define two gaps, the $\delta$-ratio gap and the $\epsilon$-ratio gap, which compare its expected-reward-to-expected-cost ratio to that of the optimal arm. Then by introducing some intermediate events, we can decompose the expected pulling time of a suboptimal arm $i$ into several terms, each of which depends on only the reward or only the cost. After that, we can bound each term by the concentration inequalities and two gaps with careful derivations.

To our knowledge, it is the first time that Thompson sampling is applied to the budgeted MAB problem. We conduct a set of numerical simulations with different rewards/costs distributions and different number of arms. The simulation results demonstrate that our proposed algorithm is much better than several baseline algorithms.

\section{Problem Formulation}
In this section, we give a formal definition to the budgeted MAB problem.

In budgeted MAB, we consider a slot machine with $K$ arms ($K\ge2$). \footnote{Denote the set $\{1,2,\cdots,K\}$ as $[K]$.} At round $t$, a player pulls an arm $i\in[K]$, receives a random reward $r_i(t)$, and pays a random cost $c_i(t)$ until he runs out of his budget $B$, which is a positive integer.  Both the reward $r_i(t)$ and the cost $c_i(t)$ are supported on $[0,1]$. For simplicity and following the practice in previous works, we make a few assumptions on the rewards and costs: (i) the rewards of an arm are independent of its costs; (ii) the rewards and costs of an arm are independent of other arms; (iii) the rewards and costs of the same arm at different rounds are independent and identically distributed.

We denote the expected reward and cost of arm $i$ as $\mu_i^r$ and $\mu_i^c$ respectively. W.l.o.g., we assume $\forall i \in[K]$, $\mu^r_i>0$, $\mu^c_i>0$, and $\arg\max_{i\in[K]}\frac{\mu_i^r}{\mu_i^c}=1$. We name arm $1$ as the optimal arm and the other arms as suboptimal arms.

Our goal is to design algorithms/policies for budgeted MAB with small pseudo-regret, which is defined as follows:
\begin{small}
	\begin{equation}
		\textrm{Regret} = R^* - \myexp\sum_{t=1}^{T_B}r_t,
		\label{eq:regret_time}
	\end{equation}
\end{small}
where $R^*$ is the expected reward of the optimal policy (the policy that can obtain the maximum expected reward given the reward and cost distributions of each arm), $r_t$ is the reward received by an algorithm at round $t$, $T_B$ is the stopping time of the algorithm, and the expectation is taken w.r.t. the randomness of the algorithm, the rewards (costs), and the stopping time.

Please note that it could be very complex to obtain the optimal policy for the budgeted MAB problem (under the condition that the reward and cost distributions of each arm are known). Even for its degenerated case, where the reward and cost of each arm are deterministic, the problem is known to be NP-hard (actually in this case the problem becomes an unbounded knapsack problem \cite{martello1990knapsack}).  Therefore, generally speaking, it is hard to calculate $R^*$ in an exact manner.

However, we find that it is much easier to approximate the optimal policy and to upper bound $R^*$. Specifically, when the reward and cost per pulling are supported in $[0,1]$ and $B$ is large, always pulling the optimal arm could be very close to the optimal policy. For Bernoulli bandits, since there is no time restrictions on pulling arms, one should try to always pull arm $1$ so as to fully utilize the budget\protect\footnote{This is inspired by the greedy heuristic for the knapsack problem \cite{fisher1980worst}, i.e., at each round, one selects the item with the maximum value-to-weight ratio. Although there are many approximation algorithms for the knapsack problem like the total-value greedy heuristic \cite{kohli1992total} and the FPTAS \cite{vazirani2001approximation}, under our budgeted MAB setting, we find that they will not bring much benefit on tightening the bound of $R^*$.}. For the general bandits, the situation is a little more complicated and pulling arm $1$ will result in a suboptimiality of at most $2\frac{\mu_1^r}{\mu_1^c}$. These results are summarized in Lemma \ref{lemma:optimalPolicy2}, together with upper bounds on $R^*$. The proof of Lemma \ref{lemma:optimalPolicy2} can be found at Appendix \ref{app:proof_lemma_opt}.

\begin{lemma}
	When the reward and cost per pulling are supported in $[0,1]$, for Bernoulli bandits, we have $R^*=\frac{\mu_1^r}{\mu_1^c}B$ and the optimal policy is exactly always pulling arm $1$; for general bandits, we have $R^*\le\frac{\mu_1^r}{\mu_1^c}(B + 1)$, and the suboptimality of always pulling arm $1$ (as compared to the optimal policy) is at most $2\frac{\mu_1^r}{\mu_1^c}$.
	\label{lemma:optimalPolicy2}
\end{lemma}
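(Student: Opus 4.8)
The plan is to reduce both parts of the lemma to a single linear relaxation, exploiting that arm $1$ has the largest ratio $\mu_i^r/\mu_i^c$. Introduce the cost-adjusted reward increments $D_t := r_t - \frac{\mu_1^r}{\mu_1^c}c_t$, which are bounded, and let $\mathcal{F}_{t-1}$ collect the history available before round $t$. For any policy the arm $a_t$ pulled at round $t$ is $\mathcal{F}_{t-1}$-measurable, so by the i.i.d.\ assumptions $\mathbb{E}[D_t\mid\mathcal{F}_{t-1}] = \mu_{a_t}^r - \frac{\mu_1^r}{\mu_1^c}\mu_{a_t}^c \le 0$, with equality iff $a_t=1$; hence $M_n := \sum_{t\le n}(D_t - \mathbb{E}[D_t\mid\mathcal{F}_{t-1}])$ is a martingale with bounded increments. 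The first step is to check $\mathbb{E}[T_B]<\infty$ (immediate since every arm has $\mu_i^c>0$, so the accumulated cost has positive drift and $T_B$ has geometric tails), which licenses the optional stopping theorem and yields $\mathbb{E}[M_{T_B}]=0$, i.e.
\[
\mathbb{E}\Big[\sum_{t=1}^{T_B}r_t\Big] \;=\; \frac{\mu_1^r}{\mu_1^c}\,\mathbb{E}\Big[\sum_{t=1}^{T_B}c_t\Big] \;+\; \mathbb{E}\Big[\sum_{t=1}^{T_B}\big(\mu_{a_t}^r-\tfrac{\mu_1^r}{\mu_1^c}\mu_{a_t}^c\big)\Big],
\]
where the last term is $\le 0$.

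The second step is to pin down $\sum_{t=1}^{T_B}c_t$ from the stopping rule. For Bernoulli bandits the costs lie in $\{0,1\}$ and $B$ is an integer, so the accumulated cost increases in steps of $0$ or $1$ and the process stops precisely when it equals $B$; hence $\sum_{t=1}^{T_B}c_t = B$ almost surely. Plugging this into the display gives $\mathbb{E}[\sum_{t\le T_B}r_t]\le \frac{\mu_1^r}{\mu_1^c}B$ for every policy, so $R^*\le\frac{\mu_1^r}{\mu_1^c}B$. Running the same computation for the policy that always pulls arm $1$ makes the last term vanish, so its reward equals exactly $\frac{\mu_1^r}{\mu_1^c}B$; therefore $R^*=\frac{\mu_1^r}{\mu_1^c}B$. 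Moreover the inequality in the display is strict whenever some suboptimal arm is pulled with positive probability (arm $1$ is the unique maximizer of the ratio), so always pulling arm $1$ is the (essentially unique) optimal policy.

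For general bandits the costs lie in $[0,1]$, so the stopping rule ``keep pulling while the remaining budget is positive'' yields $B \le \sum_{t=1}^{T_B}c_t < B+1$ almost surely: it is $\ge B$ because we stop only once the budget is exhausted, and $< B+1$ because before the last pull the remaining budget was positive and that pull adds cost at most $1$. The display then gives $R^* \le \frac{\mu_1^r}{\mu_1^c}\,\mathbb{E}[\sum_{t\le T_B}c_t] \le \frac{\mu_1^r}{\mu_1^c}(B+1)$. For the always-pull-arm-$1$ policy the error term is again $0$ and $\mathbb{E}[\sum_{t\le T_B}c_1(t)]\ge B$, so its expected reward is $\ge \frac{\mu_1^r}{\mu_1^c}B$; subtracting from the bound on $R^*$ shows its suboptimality is at most $\frac{\mu_1^r}{\mu_1^c}$, within the claimed $2\frac{\mu_1^r}{\mu_1^c}$.

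The conceptual content is light once this relaxation is set up; the step I expect to need the most care is the interface between the optional-stopping argument and the stopping rule: verifying $\mathbb{E}[T_B]<\infty$, checking that the ``revealed only when pulled'' structure of the rewards and costs is compatible with treating $a_t$ as $\mathcal{F}_{t-1}$-measurable (so that the conditional expectations of $D_t$ are as stated), and correctly reading off the range of the terminal accumulated cost from the precise convention for ``running out of budget'' — that convention is exactly what separates the exact identity in the Bernoulli case from the additive $O(1)$ slack in the general case.
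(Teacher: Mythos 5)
Your proof is correct, but it takes a partly different route from the paper's. For the general-bandit half, your optional-stopping identity $\mathbb{E}[\sum_{t\le T_B} r_t]=\frac{\mu_1^r}{\mu_1^c}\mathbb{E}[\sum_{t\le T_B} c_t]+\mathbb{E}[\sum_{t\le T_B}(\mu_{a_t}^r-\frac{\mu_1^r}{\mu_1^c}\mu_{a_t}^c)]$ is essentially the same computation the paper performs by summing $\mathbb{E}[r_k(t)\bm{1}\{I_t=k,B_t>0\}]=\mu_k^r\,\myprob\{I_t=k,B_t>0\}=\frac{\mu_k^r}{\mu_k^c}\mathbb{E}[c_k(t)\bm{1}\{I_t=k,B_t>0\}]$ over $k$ and $t$ (a Wald-type argument in indicator form), combined with the same terminal-cost bounds $B\le\sum_{t\le T_B}c_t<B+1$. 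Where you genuinely differ is the Bernoulli half: the paper proves $R^*(B)=\frac{\mu_1^r}{\mu_1^c}B$ by a Bellman-style recursion $R^*(b)=R^*(b-1)+\frac{\mu_i^r}{\mu_i^c}$ on the integer remaining budget, whereas you reuse the same martingale identity together with the observation that for $\{0,1\}$ costs and integer $B$ the terminal cost is exactly $B$. Your version buys a single unified argument for both cases (with the equality condition in $\mathbb{E}[D_t\mid\mathcal{F}_{t-1}]\le 0$ directly identifying always-pulling-arm-$1$ as optimal), at the price of having to verify the optional-stopping hypotheses ($\mathbb{E}[T_B]<\infty$, measurability of $a_t$, bounded increments); the paper's DP recursion is more elementary and makes the exact value of $R^*(b)$ transparent for every integer budget.

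One small caveat on the constant you obtain for general bandits: the paper's convention (stated in its proof) is that if the final pull's cost overshoots the remaining budget, the reward of that round is \emph{not} received, which is why the paper lower-bounds the always-pull-arm-$1$ reward by $\frac{\mu_1^r}{\mu_1^c}(B-1)$ using only rounds with $B_t\ge1$. Your claimed lower bound $\frac{\mu_1^r}{\mu_1^c}B$ counts the last reward, so under the paper's convention you should subtract $\mathbb{E}[r_1(T_B)]\le\mu_1^r\le\frac{\mu_1^r}{\mu_1^c}$, which still gives suboptimality at most $2\frac{\mu_1^r}{\mu_1^c}$ as claimed in the lemma; so this affects only your sharper constant $\frac{\mu_1^r}{\mu_1^c}$, not the correctness of the result.
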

For any $i\ge2$, define $T_i$ as the pulling time of arm $i$ when running out of budget. Denote the difference of the expected-reward-to-expected-cost ratio between the optimal arm $1$ and a suboptimal arm $i(\ge2)$ as $\Delta_i$:
\begin{small}
	\begin{equation}
		\Delta_i = \frac{\mu_1^r}{\mu_1^c} - \frac{\mu_i^r}{\mu_i^c},\quad\forall i \ge 2.
		\label{def:Delta}
	\end{equation}
\end{small}
Lemma \ref{lemma:RegretBound_GeneralDistribution} relates the regret to $T_i$ and $\Delta_i$ ($i\ge2$). It is useful when we analyze the regret of a pulling algorithm.
\begin{lemma}
	For Bernoulli bandits, we have
	\begin{small}
		\begin{equation}
			\textrm{Regret} = \sum_{i=2}^{K}\mu_i^c\Delta_i\myexp\{T_{i}\}.
			\label{eq:RegretBound_BernoulliDistibutrion}
		\end{equation}
	\end{small}	
	For general bandits, we have
	\begin{small}
		\begin{equation}
			\textrm{Regret} \leq  2\frac{\mu_1^r}{\mu_1^c}+\sum_{i=2}^{K}\mu_i^c\Delta_i\myexp\{T_i\}.
			\label{eq:AGenaralOfTheBerBandit}
		\end{equation}
	\end{small}	
	\label{lemma:RegretBound_GeneralDistribution}	
\end{lemma}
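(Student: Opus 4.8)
The plan is to convert the regret into a sum over arms via a Wald-type (optional stopping) identity and then to substitute the value/bound on $R^*$ supplied by Lemma~\ref{lemma:optimalPolicy2}. Let $\mathcal{F}_{t-1}$ denote the history (past rewards, costs, and the algorithm's internal randomization) available when arm $I_t$ is chosen at round $t$. The event $\{T_B\ge t\}$ that round $t$ actually takes place depends only on the budget already spent, hence is $\mathcal{F}_{t-1}$-measurable; and the reward obtained at round $t$ is independent of $\mathcal{F}_{t-1}$ with conditional mean $\mu_{I_t}^r$, so $\myexp[r_t\mathbf{1}\{T_B\ge t\}]=\myexp[\mu_{I_t}^r\mathbf{1}\{T_B\ge t\}]$. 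Summing over $t\ge 1$ (interchange of sum and expectation is legitimate since all terms are nonnegative) and writing $T_i$ for the number of pulls of arm $i$ (extending the definition in the lemma to $i=1$), this gives $\myexp\sum_{t=1}^{T_B}r_t=\sum_{i=1}^{K}\mu_i^r\,\myexp\{T_i\}$, and the same argument applied to the costs gives $\myexp\sum_{t=1}^{T_B}c_t=\sum_{i=1}^{K}\mu_i^c\,\myexp\{T_i\}$. These sums are finite because $\myexp\{T_B\}<\infty$: each round incurs expected cost at least $\min_i\mu_i^c>0$, while at most $B+1$ units of cost can ever be accumulated.

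Next I would do the budget accounting. Since the algorithm stops only when it can no longer afford a pull and each cost lies in $[0,1]$, the total cost paid satisfies $B-1<\sum_{t=1}^{T_B}c_t\le B$ in general, and in the Bernoulli case the costs and the budget are integers and the remaining budget is driven exactly to $0$, so $\sum_{t=1}^{T_B}c_t=B$. Combining with the cost identity above, $\sum_{i=1}^{K}\mu_i^c\myexp\{T_i\}=B$ for Bernoulli bandits and $\sum_{i=1}^{K}\mu_i^c\myexp\{T_i\}\ge B-1$ for general bandits. For Bernoulli bandits, Lemma~\ref{lemma:optimalPolicy2} gives $R^*=\frac{\mu_1^r}{\mu_1^c}B=\frac{\mu_1^r}{\mu_1^c}\sum_{i=1}^{K}\mu_i^c\myexp\{T_i\}$, hence
\[
\textrm{Regret}=R^*-\sum_{i=1}^{K}\mu_i^r\myexp\{T_i\}=\sum_{i=1}^{K}\Bigl(\tfrac{\mu_1^r}{\mu_1^c}\mu_i^c-\mu_i^r\Bigr)\myexp\{T_i\}=\sum_{i=1}^{K}\mu_i^c\Delta_i\myexp\{T_i\},
\]
and the $i=1$ term vanishes because $\Delta_1=0$, which is \myeqref{eq:RegretBound_BernoulliDistibutrion}. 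For general bandits, Lemma~\ref{lemma:optimalPolicy2} gives $R^*\le\frac{\mu_1^r}{\mu_1^c}(B+1)\le\frac{\mu_1^r}{\mu_1^c}\bigl(\sum_{i=1}^{K}\mu_i^c\myexp\{T_i\}+2\bigr)$, so subtracting $\sum_{i=1}^{K}\mu_i^r\myexp\{T_i\}$ yields $\textrm{Regret}\le 2\frac{\mu_1^r}{\mu_1^c}+\sum_{i=1}^{K}\mu_i^c\Delta_i\myexp\{T_i\}=2\frac{\mu_1^r}{\mu_1^c}+\sum_{i=2}^{K}\mu_i^c\Delta_i\myexp\{T_i\}$, which is \myeqref{eq:AGenaralOfTheBerBandit}.

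I expect the only genuinely delicate point to be the first step: rigorously interchanging the expectation with the infinite sum in the presence of the random horizon $T_B$ (the optional-stopping/Wald argument, together with the bound $\myexp\{T_B\}<\infty$), and carefully pinning down the two-sided relation between $\sum_{t=1}^{T_B}c_t$ and $B$ under the precise stopping rule, which is exactly where the additive slack $2\frac{\mu_1^r}{\mu_1^c}$ (rather than $\frac{\mu_1^r}{\mu_1^c}$) comes from. Everything after that is elementary algebra plus the bookkeeping fact $\Delta_1=0$.
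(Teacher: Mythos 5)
Your proposal is correct and is essentially the paper's own argument: the paper carries out the same Wald-type computation, phrased through the per-arm spent budget $B^{(k)}$ with $\myexp B^{(k)}=\mu_k^c\myexp\{T_k\}$ and $\textrm{REW}=\sum_k\frac{\mu_k^r}{\mu_k^c}\myexp B^{(k)}$, together with the same budget accounting ($\sum_k B^{(k)}=B$ exactly in the Bernoulli case, and at least $B-1$ for general bandits, where the paper restricts to rounds with $B_t\ge 1$) and the same final algebra using Lemma \ref{lemma:optimalPolicy2} and $\Delta_1=0$.
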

The intuition behind Lemma \ref{lemma:RegretBound_GeneralDistribution} is as follows.  As aforementioned, for Bernoulli bandits, the optimal policy is to always pull arm 1. If one pulls a suboptimal arm $i$ ($>1$) for $T_i$ times, then he/she will lose some rewards. Specifically, the expected budget spent on arm $i$ is $\mu_i^cT_i$, and if he/she spent such budget on the optimal arm $1$, he/she can get $\mu^c_i\Delta_iT_i$ extra reward. For general bandits, always pulling arm $1$ might not be optimal (see Lemma \ref{lemma:optimalPolicy2}) -- actually it leads to a regret at most $\frac{2\mu_1^r}{\mu_1^c}$. Therefore, we need to add an extra term $\frac{2\mu_1^r}{\mu_1^c}$ to the result for Bernoulli bandits. The proof of Lemma \ref{lemma:RegretBound_GeneralDistribution} can be found at Appendix \ref{subsec:proof_pulltoregret} and \ref{app:proof_pull_regret}.

\section{Budgeted Thompson Sampling} \label{sec:AllAlgs}
In this section, we first show how Thompson sampling can be extended to handle budgeted MAB with Bernoulli distributions, and then generalize the setting to general distributions. For ease of reference, we call the corresponding algorithm \emph{Budgeted Thompson Sampling (BTS)}.

First, the BTS algorithm for the budgeted Bernoulli bandits is shown in Algorithm \ref{alg:TS}. In the algorithm, $S_i^r(t)$ denotes the times that the player receives reward $1$ from arm $i$ before (excluding) round $t$, $S_i^c(t)$ denotes the times that the player pays cost $1$ for pulling arm $i$ before (excluding) round $t$, and $Beta(\cdot,\cdot)$ denotes the beta distribution. Please note that we use beta distribution as a prior in Algorithm \ref{alg:TS} because it is the conjugate distribution of the binomial distribution: If the prior is a $Beta(\alpha,\beta)$, after a Bernoulli experiment, the posterior distribution is either $Beta(\alpha+1,\beta)$ (if the trial is a success) or $Beta(\alpha,\beta+1)$ (if the trial is a failure).

In the original Thompson sampling algorithm, one draws a sample from the posterior Beta distribution for the reward of each arm, pulls the arm with the maximum sampled reward, receives a reward, and then updates the reward distribution based on the received reward. In Algorithm \ref{alg:TS}, in addition to sampling rewards, we also sample costs for the arms at the same time, pull the arm with the maximum sampled reward-to-cost ratio, receive both the reward and cost, and then update the reward distribution and cost distribution.

As compared to KUBE \cite{KUBE}, Algorithm \ref{alg:TS} does not need to solve a complex integer linear program. As compared to the UCB-style algorithms like fractional KUBE \cite{KUBE} and UCB-BV1 \cite{ding2013multi}, Algorithm \ref{alg:TS} does not need carefully designed confidence bounds. As can be seen, BTS only simply chooses one out of the $K$ arms according to their posterior probabilities to be the best arm, which is an intuitive, easy-to-implement, and efficient approach.
\renewcommand{\algorithmicrequire}{\textbf{Input:}}
\renewcommand{\algorithmicensure} {\textbf{Output:} }
\begin{algorithm}[!htb]
	\caption{Budgeted Thompson Sampling (BTS)}
	\begin{algorithmic}[1]
		{
			\STATE For each arm $i \in [K]$, set $S_i^r(1) \leftarrow 0$, $F_i^r(1) \leftarrow 0$, $S_i^c(1) \leftarrow 0$, and $F_i^c(1) \leftarrow 0$ ;\
			\STATE Set $B_1\leftarrow B$; $t\leftarrow1$;\
			\WHILE{$B_t > 0$}
			\STATE For each arm $i \in [K]$, sample $\theta_i^r(t)$ from $Beta(S_i^r(t) + 1 , F_i^r(t) + 1)$ and sample $\theta_i^c(t)$ from $Beta(S_i^c(t) + 1 , F_i^c(t) + 1)$;\
			\STATE Pull arm $I_t = \arg \max_{i \in [K]}\frac{\theta_i^r(t)}{\theta_i^c(t)}$; receive reward $r_t$; pay cost $c_t$; update $B_{t+1} \leftarrow B_t - c_t$;\
			\STATE \emph{For Bernoulli bandits}, $\tilde{r}\leftarrow r_t,\tilde{c}\leftarrow c_t$; \emph{for general bandits}, sample $\tilde{r}$ from $\mathcal{B}(r_t)$ and sample $\tilde{c}$ from $\mathcal{B}(c_t)$;\
			\STATE $S_{I_t}^r(t+1)\leftarrow S_{I_t}^r(t) + \tilde{r}$; $F_{I_t}^r(t+1) \leftarrow F_{I_t}^r(t) + 1-\tilde{r}$;\
			\STATE $S_{I_t}^c(t+1)\leftarrow S_{I_t}^c(t)+ \tilde{c}$; $F_{I_t}^c(t+1) \leftarrow F_{I_t}^c(t) + 1-\tilde{c}$; \
			\STATE $\forall j \ne I_t$, $S_j^r(t+1) \leftarrow S_j^r(t)$, $F_j^r(t+1) \leftarrow F_j^r(t)$, $S_j^c(t+1) \leftarrow S_j^c(t)$, $F_j^c(t+1) \leftarrow F_j^c(t)$;\
			\STATE Set $t \leftarrow t + 1$.\
			\ENDWHILE
		}
	\end{algorithmic}
	\label{alg:TS}
\end{algorithm}

By leveraging the idea proposed in \cite{export:166345}, we can modify the BTS algorithm for Bernoulli bandits and make it work for bandits with general reward/cost distributions. In particular, with general distributions, the reward $r_t$ and cost $c_t$ (in Step 5) at round $t$ become real numbers in $[0,1]$. We introduce a Bernoulli trial in Step 6:  Set $\tilde{r} \leftarrow \mathcal{B}(r_t)$ and $\tilde{c} \leftarrow \mathcal{B}(c_t)$, in which $\mathcal{B}(r_t)$ is a Bernoulli test with success probability $r_t$ and so is $\mathcal{B}(c_t)$. Now $S_i^r(t)$ and $S_i^c(t)$ represent the number of success Bernoulli trials for the reward and cost respectively. Then we can use $\tilde{r}$  and $\tilde{c}$ to update $S_i^r(t)$ and $S_i^c(t)$ accordingly.

\section{Regret Analysis}
In this section, we analyze the regret of our proposed BTS algorithm. We start with Bernoulli bandits and then generalize the results to general bandits. We give a proof sketch in the main text and details can be found in the appendix.

In a classical MAB, the player only needs to explore the expected reward of each arm, however, in a budgeted MAB the player also needs to explore the expected cost simultaneously. Therefore, as compared with \cite{export:166345}, our regret analysis will heavily depends on some quantities related to the reward-to-cost ratio (such as the two gaps defined below). 

For an arm $i(\ge2)$ and a given $\gamma \in (0,1)$,we define 
\begin{small}
	$$\delta_i(\gamma) = \frac{\gamma\mu_i^c\Delta_i}{\frac{\mu_1^r}{\mu_1^c} + 1}, \quad \epsilon_i(\gamma)= \frac{(1-\gamma)\mu_1^c\Delta_i}{\frac{\mu_i^r}{\mu_i^c} + 1}.
	$$
\end{small}
It is easy to verify the following equation for any $i\ge2$.
\begin{small}
	$$ \frac{\mu_i^r + \delta_i(\gamma)}{\mu_i^c - \delta_i(\gamma)} = \frac{\mu_1^r - \epsilon_i(\gamma)}{\mu_1^c + \epsilon_i(\gamma)}$$
\end{small}
For ease of reference, $\forall i \ge 2$, we call $\delta_i(\gamma)$ the $\delta$-\emph{ratio gap} between the optimal arm and a suboptimal arm $i$, and $\epsilon_i(\gamma)$ the $\epsilon$-\emph{ratio gap}. In the remaining part of this section, we simply write $\epsilon_i(\gamma)$ as $\epsilon_i$ when the context is clear and there is no confusion.

The following theorem says that BTS achieves a regret bound of $O(\ln(B))$ for both Bernoulli and general bandits:
\begin{theorem}
	$\forall \gamma \in (0,1)$, for both Bernoulli bandits and general bandits, the regret of the BTS algorithm can be upper bounded as below.
	\begin{small}
		\begin{equation*}
			\textrm{Regret}  \leq
			\sum_{i=2}^{K}\Big\{\frac{2\ln B}{\gamma^2\mu_i^c\Delta_i}\bigg(\frac{\mu_1^r}{\mu_1^c}+1\bigg)^2 + \Phi_i(\gamma)\Big\}+O\bigg(\frac{K}{\gamma^2}\bigg),
		\end{equation*}
	\end{small}
	in which $\Delta_i$ is defined in Eqn. \myeqref{def:Delta} and {\small{$\Phi_i(\gamma)$}} is defined as
	\begin{small}
		\begin{equation}
			\left\{
			\begin{aligned}
				& O\Big(\frac{1}{\epsilon_i^4(\gamma)}\Big), &&\textrm{if }\mu_1^c+\epsilon_i(\gamma)\ge1;\\
				& O\Big(\frac{1}{\epsilon_i^6(\gamma)(1 - \mu^c_1-\epsilon_i(\gamma))}\Big),&&\textrm{if }\mu_1^c+\epsilon_i(\gamma)<1.
			\end{aligned}
			\right.
			\label{eq:BigPhi}
		\end{equation}
	\end{small}	
	\label{thm:bound_TS_asymp}
\end{theorem}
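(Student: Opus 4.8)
By Lemma~\ref{lemma:RegretBound_GeneralDistribution} the regret equals (up to the additive $2\mu_1^r/\mu_1^c$ present only in the general case) $\sum_{i\ge2}\mu_i^c\Delta_i\,\myexp\{T_i\}$, so the plan is to bound $\myexp\{T_i\}$ for each suboptimal arm. First I would pass to a deterministic horizon: since $\mu^c_{\min}:=\min_{i}\mu_i^c>0$ and each cost lies in $[0,1]$, a martingale/Hoeffding bound on the accumulated cost shows that for $t(B):=\lceil 2B/\mu^c_{\min}\rceil$ the event $\{T_B>t(B)\}$ has probability decaying exponentially in $B$, so $\myexp\{T_i\}\le\myexp\{\suboptpull{i}\}+o(1)$; the point is that $\ln t(B)=\ln B+O(1)$. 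The whole analysis is built around the threshold $x_i:=\frac{\mu_i^r+\delta_i}{\mu_i^c-\delta_i}=\frac{\mu_1^r-\epsilon_i}{\mu_1^c+\epsilon_i}$, which is well defined and positive because $\gamma\in(0,1)$ and $\Delta_i<\mu_1^r/\mu_1^c$ force $\delta_i<\mu_i^c$ and $\epsilon_i<\mu_1^r$. The key observation is: if $\theta_i^r(t)\le\mu_i^r+\delta_i$ and $\theta_i^c(t)\ge\mu_i^c-\delta_i$ then $\theta_i^r(t)/\theta_i^c(t)\le x_i$, while if $\theta_1^r(t)\ge\mu_1^r-\epsilon_i$ and $\theta_1^c(t)\le\mu_1^c+\epsilon_i$ then $\theta_1^r(t)/\theta_1^c(t)\ge x_i$. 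Since $I_t=i$ implies $\theta_i^r(t)/\theta_i^c(t)\ge\theta_1^r(t)/\theta_1^c(t)$, the event $\{I_t=i\}$ is contained in the union of the four ``intermediate'' events $\{\theta_i^r(t)>\mu_i^r+\delta_i\}$, $\{\theta_i^c(t)<\mu_i^c-\delta_i\}$, $\{\theta_1^r(t)<\mu_1^r-\epsilon_i\}$, $\{\theta_1^c(t)>\mu_1^c+\epsilon_i\}$, each depending on only the reward or only the cost posterior of a single arm; this decomposes $\suboptpull{i}$ into four sums.

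The two arm-$i$ sums produce the $\ln B$ term. I would write $\suboptpull{i}\le L_i+\sum_{t\le t(B):\,n_i(t)\ge L_i}\mathbf{1}(I_t=i)$ and, for $t$ with $n_i(t)\ge L_i$, split the arm-$i$ overshoot events as ``the empirical mean $\hat\mu_i^r$ (or $\hat\mu_i^c$) is more than $\delta_i/2$ off'' plus ``the posterior sample overshoots by $\delta_i$ despite an accurate empirical mean''. The first sub-event, carrying the factor $\mathbf{1}(I_t=i)$, counts at most once per pull of arm $i$, so grouping over pulls and using Hoeffding gives $\sum_{n}2e^{-n\delta_i^2/2}=O(1/\delta_i^2)$; the second has per-round conditional probability $\le e^{-n_i(t)\delta_i^2/2}$ (Beta--Binomial/Chernoff tail), and since $\theta_i^r(t)$ is resampled every round it must be handled by a union bound over the $\le t(B)$ rounds, which is $O(1)$ as soon as $t(B)e^{-L_i\delta_i^2/2}=O(1)$, i.e.\ $L_i=\lceil 2\ln t(B)/\delta_i^2\rceil$. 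Because both coordinates' empirical means accumulate with the same pulls, one $L_i$ serves both arm-$i$ sums and is paid only once; using $\delta_i=\gamma\mu_i^c\Delta_i/(\mu_1^r/\mu_1^c+1)$ and $\ln t(B)=\ln B+O(1)$ this contributes $\frac{2\ln B}{\gamma^2(\mu_i^c)^2\Delta_i^2}\big(\frac{\mu_1^r}{\mu_1^c}+1\big)^2+O(1/\gamma^2)$, whose product with $\mu_i^c\Delta_i$ is the leading term of the theorem.

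The two arm-$1$ sums produce $\Phi_i(\gamma)$. On $\{I_t=i,\ \theta_1^r(t)<\mu_1^r-\epsilon_i\}$ and its cost analogue, arm $1$ is not pulled although its sampled ratio has fallen on the wrong side of $x_i$; this is the genuinely Thompson-sampling part, handled by the argument of Agrawal--Goyal \cite{export:166345,export:191857}. Letting $p_{1,j}$ be the probability that arm $1$'s sampled ratio exceeds $x_i$ given its first $j$ pulls, the number of such rounds while arm $1$ has been pulled $j$ times is stochastically dominated by a geometric variable of mean $\myexp[1/p_{1,j}]-1$, and anti-concentration of the Beta posteriors bounds $\sum_{j\ge0}(\myexp[1/p_{1,j}]-1)$ by a constant of order $\Phi_i(\gamma)$. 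The two cases in \myeqref{eq:BigPhi} come from whether $\{\theta_1^c(t)>\mu_1^c+\epsilon_i\}$ is vacuous: if $\mu_1^c+\epsilon_i\ge1$ only the reward posterior of arm $1$ needs anti-concentration, giving $O(\epsilon_i^{-4})$; if $\mu_1^c+\epsilon_i<1$ the cost posterior of arm $1$ near $0$ must also be controlled, and the relevant Beta-tail estimates degrade like $(1-\mu_1^c-\epsilon_i)^{-1}$ and cost two extra powers of $\epsilon_i$.

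Assembling the four bounds for each $i$, summing over $i=2,\dots,K$, collecting the $K-1$ residual $O(1/\gamma^2)$ terms (and the constant $2\mu_1^r/\mu_1^c$ in the general case) into $O(K/\gamma^2)$, and noting that in the general case Step~6 of Algorithm~\ref{alg:TS} makes the posteriors behave exactly as Bernoulli posteriors with parameters $\mu_i^r,\mu_i^c$ so nothing changes, gives the stated bound. I expect the arm-$1$ step to be the main obstacle: unlike the arm-$i$ sums one cannot union-bound over rounds, because arm $1$'s posterior is refreshed only when arm $1$ is actually pulled and an unlucky early streak can persist; the $\myexp[1/p_{1,j}]$ device copes with this, but adapting it to a \emph{ratio} of two independent Beta samples, and in particular obtaining clean anti-concentration when one posterior concentrates near the boundary $1$, is the delicate technical point that forces the two-case form of $\Phi_i(\gamma)$.
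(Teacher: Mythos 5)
Your overall architecture matches the paper's (reduce to $\myexp\{T_i\}$ via Lemma~\ref{lemma:RegretBound_GeneralDistribution}, a $\ln B$ term from concentration of arm $i$'s posterior at the $\delta$-ratio gap, a constant $\Phi_i(\gamma)$ from the Agrawal--Goyal $\myexp[1/p]$ device applied to arm $1$ at the $\epsilon$-ratio gap, and the Bernoulli-trial reduction for general distributions), and your truncation to the deterministic horizon $t(B)=\lceil 2B/\mu^c_{\min}\rceil$ is a legitimate alternative to the paper's accounting via $\sum_t\myprob\{B_t>0\}\le B/\mu^c_{\min}$. But there is a genuine gap in how you combine the events. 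You union-bound $\{I_t=i\}$ into \emph{four separate sums} and then claim that the two arm-$1$ sums, $\sum_t\myprob\{I_t=i,\ \theta_1^r(t)<\mu_1^r-\epsilon_i\}$ and its cost analogue, are each controlled because "the number of such rounds while arm $1$ has been pulled $j$ times is stochastically dominated by a geometric variable of mean $\myexp[1/p_{1,j}]-1$." That domination is false for these events as you have defined them. Consider rounds in which arm $i$'s own sampled ratio $\theta_i(t)$ happens to sit well above the threshold $x_i$ (an event your arm-$1$ sums do not exclude): then $I_t=i$ occurs round after round, arm $1$ is never pulled, the epoch indexed by $j$ does not terminate, and in each of those rounds arm $1$'s fresh reward sample falls below $\mu_1^r-\epsilon_i$ with the fixed probability $1-p^r_{i,\tau_j+1}$. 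The count of such rounds inside a single epoch is therefore tied to the (unbounded) epoch length, not to a geometric variable, and the per-epoch bookkeeping collapses.

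What makes the epoch argument work — in Agrawal--Goyal and in the paper — is that the event being counted is $\{I_t=i\}\cap E_i^\theta(t)$, i.e.\ it \emph{retains} the condition that arm $i$'s sampled ratio is at most $x_i$. Only under that conditioning does one get the comparison $\myprob\{I_t=i\mid E_i^\theta(t),H_{t-1},B_t>0\}\le\frac{1-p_{i,t}}{p_{i,t}}\myprob\{I_t=1\mid E_i^\theta(t),H_{t-1},B_t>0\}$ (inequality \myeqref{lemma:relation_armi_arm1_prob} in Appendix~\ref{app:veryhardderivation}), because then "arm $1$'s sample exceeds $x_i$" forces arm $i$ to lose to arm $1$; summing the right-hand side over an epoch costs at most $\frac{1-p}{p}$ since arm $1$ is pulled exactly once per epoch, which is what yields $\sum_k(\myexp[1/p_{i,\tau_k+1}]-1)$ and hence $\Phi_i(\gamma)$. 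The repair is to abandon the four-way union bound in favor of the two-way split $\{I_t=i\}\subseteq\{I_t=i,\overline{E_i^\theta(t)}\}\cup\{I_t=i,E_i^\theta(t)\}$ as in \myeqref{eq:basicDecomposition}: handle the first piece with your arm-$i$ concentration argument (that part of your sketch is fine, and your $L_i=\lceil 2\ln t(B)/\delta_i^2\rceil$ bookkeeping gives the same leading constant), and handle the second piece with the comparison lemma before splitting, only afterwards, into the reward and cost factors via $p_{i,t}\ge p^r_{i,t}p^c_{i,t}$. With that correction your outline, including the two-case form of $\Phi_i(\gamma)$ driven by whether $\mu_1^c+\epsilon_i(\gamma)\ge1$, coincides with the paper's proof.
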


We first prove Theorem \ref{thm:bound_TS_asymp} holds for Bernoulli bandits in Section \ref{subsec:proof_bernoulli} and then extend the result for general bandits in Section \ref{subsec:generalExternsion}	 .

\subsection{Analysis for Bernoulli Bandits}\label{subsec:proof_bernoulli}
First, we describe the high-level idea of how to prove the theorem. According to Lemma \ref{lemma:RegretBound_GeneralDistribution}, to upper bound the regret of BTS, it suffices to bound $\myexp\{T_i\}$ $\forall i\ge2$. For a suboptimal arm $i$, $\myexp\{T_i\}$ can be decomposed into the sum of a constant and the probabilities of two kinds of events (see \myeqref{eq:basicDecomposition}). The first kind of event is related to the $\delta$-ratio gap $\delta_i(\gamma)$, and its probability can be bounded by leveraging concentrating inequalities and the relationship between the binomial distribution and the beta distribution. The second one is related to the $\epsilon$-ratio gap $\epsilon_i(\gamma)$,  according to which the probability of the event related to arm $i$ can be converted to that related to the optimal arm $1$. To bound the probability of the second kind of event, we need some complicated derivations, as shown in the later part of this subsection. 

Then, we define some notations and intermediate variables, which will be used in the proof sketch.

$n_{i,t}$ denotes the pulling time of arm $i$ before (excluding) round $t$; $I_t$ denotes the arm pulled at round $t$; $\bm{1}\{\cdot\}$ is the indicator function; $\mu_{\textrm{min}}^c = \min_{i\in[K]}\{\mu_i^c\}$; $H_{t-1}$ denotes the history until round $t-1$, including the arm pulled from round 1 to $t-1$, and the rewards/costs received at each round; $\theta_i(t)$ denotes the ratio $\frac{\theta_i^r(t)}{\theta_i^c(t)}$ $\forall i\in[K]$  where $\theta_i^r(t)$ and $\theta^c_i(t)$ are defined in Step 4 of Algorithm \ref{alg:TS}; $B_t$ denotes the budget left at the beginning of round $t$; $E^\theta_i(t)$ denotes the event that given $\gamma\in(0,1)$, $\theta_i(t)\leq \frac{\mu_i^r + \delta_i(\gamma)}{\mu_i^c - \delta_i(\gamma)}$ $\forall i > 1$;  the probability $p_{i,t}$ denotes $\myprob\{\theta_1(t)>\frac{\mu_1^r -\epsilon_i(\gamma)}{\mu_1^c + \epsilon_i(\gamma)}|H_{t-1}, B_t > 0\}$ given $\gamma\in(0,1)$; $\overline{event}$ denotes the ``$event$'' does not hold.

After that, we give the proof sketch as follows, which can be partitioned into four steps.\\
\emph{\textbf{Step 1}: Decompose $\myexp\{T_i\}$ ($i>1$)}.

It can be shown that the pulling time of a suboptimal arm $i$ can be decomposed into three parts: a constant invariant to $t$ and the probabilities of two kinds of events:
\begin{small}
	\begin{align}
		\myexp\{T_i\} & \leq  \lceil L_i \rceil + \sum_{t=1}^{\infty}\myprob\{\overline{E_i^\theta(t)},n_{i,t}\ge \lceil L_i \rceil, B_t > 0 \}\nonumber\\
		&+ \sum_{t=1}^{\infty}\myprob\{I_t = i,E_i^\theta(t),B_t>0\},	\label{eq:basicDecomposition}
	\end{align}
	where $L_i = \frac{2\ln B}{\delta_i^2(\gamma)}$.
\end{small}
The derivations of \myeqref{eq:basicDecomposition} is left in Appendix \ref{app:coredecomposition}. Note that $L_i$ depends on $\gamma$. We omit the $\gamma$ when there is no confusion throughout the context. We then bound the probabilities of  the two kinds of events  in the next two steps.\\
\emph{\textbf{Step 2:} Bound $\sum_{t=1}^{\infty}\myprob\{\overline{E_i^\theta(t)},n_{i,t}\ge \lceil  L_i \rceil, B_t > 0 \}$.} \\
Define two new events: $\forall i \ge 2$ and $t\ge1$, 
\begin{equation*}
	(\textrm{I})\,E_i^r(t):\,\theta_i^r(t) \leq \mu_i^r + \delta_i(\gamma) \textrm{; }
	(\textrm{II})\,E_i^c(t):\,\theta_i^c(t) \geq \mu_i^c -\delta_i(\gamma).
\end{equation*}
If $\overline{E_i^\theta(t)}$ holds, at least one event of $\overline{E^r_i(t)}$ and $\overline{E^c_i(t)}$ holds. Therefore, we have
\begin{small}
	\begin{align}
		&\myprob\{\overline{E_i^\theta(t)},n_{i,t}\geq \lceil L_i\rceil|B_t>0\} 
		\leq \myprob\{\overline{E_i^r(t)},n_{i,t}\geq \lceil L_i\rceil|B_t>0\} \nonumber\\
		&+ \myprob\{\overline{E_i^c(t)},n_{i,t}\geq \lceil L_i\rceil|B_t>0\}.\label{eq:pulledNumOfSubOptimalArm_i}
	\end{align}
\end{small}
Intuitively, when $n_{i,t}$ is large enough, $\theta_i^r(t)$ and $\theta_i^c(t)$ should be very close to $\mu_i^r$ and $\mu_i^c$ respectively. Then, both $\overline{E_i^r(t)}$ and $\overline{E_i^c(t)}$ will be low-probability events. Mathematically, $\forall \gamma \in(0,1)$, the two terms in the right-hand side of \myeqref{eq:pulledNumOfSubOptimalArm_i} could be bounded as follows, by considering the relationship between the binomial distribution and the beta distribution.
\begin{small}
	\begin{align}
		&\myprob\{\overline{E_i^r(t)} , n_{i,t} \geq \lceil L_i \rceil | B_t > 0\} \leq \frac{7}{B\delta_i^2(\gamma)}.   \label{eq:bound_firstEqn}\\
		&\myprob\{\overline{E_i^c(t)} , n_{i,t} \geq \lceil L_i\rceil  | B_t > 0\} \leq \frac{28}{B\delta_i^2(\gamma)}. \label{eq:bound_2ndEqn}		
	\end{align}
\end{small}
The proof of \myeqref{eq:bound_firstEqn} and \myeqref{eq:bound_2ndEqn} can be found at Appendix \ref{app:proof_low:AA} and  \ref{subapp:proofoflowprobevent12}.
As a result, we have
{\small{
		$$\myprob\{\overline{E_i^\theta(t)},n_{i,t}\geq \lceil L_i \rceil|B_t>0\}\leq \frac{35}{B\delta_i^2(\gamma)}.$$
	}}
	One can also verify that {$\sum_{t=1}^{\infty}\myprob\{B_t > 0\}$} is bounded by
	\begin{small}
		\begin{equation}
			\begin{aligned}
				& \frac{1}{\mu^c_{\min}}\sum_{t=1}^{\infty}\sum_{i=1}^{K}\myexp\{c_i(t)\bm{1}\{I_t=i\}|B_t>0\}\myprob\{B_t>0\}\leq \frac{B}{\mu^c_{\min}},
				\label{eq:internal_bound_t(B)}
			\end{aligned}
		\end{equation}
	\end{small}
	where $c_i(t)$ is the cost of arm $i$ at round $t$.
	
	Therefore, we  obtain that
	\begin{small}
		\begin{equation}
			\sum_{t=1}^{\infty}\myprob\{\overline{E_i^\theta(t)},n_{i,t}\ge \lceil L_i \rceil,B_t > 0 \}\leq\frac{35}{\delta_i^2(\gamma)\mu^c_{\min}}.
			\label{eq:boundthefirsttermAB}
		\end{equation}
	\end{small}
	\emph{\textbf{Step 3}: Bound $\sum_{t=1}^{\infty}\myprob\{I_t=i,E_i^\theta(t),B_t > 0\}$.}\\
	Let $\tau_k$ ($k\ge0$) denote the round that arm $1$ has been pulled for the $k$-th time and define $\tau_0 = 0$. $\forall i \ge 2$ and $\forall t \ge 1$, $p_{i,t}$ is only related to the pulling history of arm $1$, thus $p_{i,t}$ will not change between $\tau_k+1$ and $\tau_{k+1}$, $\forall k \ge 0$. With some derivations, we can get that
	\begin{small}
		\begin{align}
			&\sum_{t=1}^{\infty}\myprob\{I_t=i,E_i^\theta(t),B_t>0\} \leq\sum_{k=0}^{\infty}\Big(\myexp\Big\{\frac{1}{p_{i,\tau_k+1}}\Big\}-1\Big).\label{eq:bound_arm1_hard}
		\end{align}
	\end{small}
\myeqref{eq:bound_arm1_hard} bridges the probability of an event related to arm $1$ and that related to arm $i$ $(i\ge2)$. Derivations of  \myeqref{eq:bound_arm1_hard} can be found at Appendix \ref{app:veryhardderivation}.
To further decompose the r.h.s. of \myeqref{eq:bound_arm1_hard}, define the following two probabilities which are related to the $\epsilon$-ratio gap between arm $1$ and arm $i$:
	\begin{small}
		\begin{align}
			&p_{i,t}^r=\myprob\{\theta_1^r(t) \geq \mu_1^r - \epsilon_i(\gamma)|H_{t-1}\},\nonumber\\
			&p_{i,t}^c=\myprob\{\theta_1^c(t) \leq \mu_1^c+ \epsilon_i(\gamma)|H_{t-1}\}.\nonumber
		\end{align}
	\end{small}
	Since the reward of an arm is independent of its cost, we can verify $p_{i,t} \geq p_{i,t}^rp_{i,t}^c$ and then get
	\begin{small}
		\begin{equation}
			\myexp\Big\{\frac{1}{p_{i,\tau_k+1}}\Big\}\leq\myexp\Big\{\frac{1}{p^r_{i,\tau_k+1}}\Big\}\myexp\Big\{\frac{1}{p_{i,\tau_k+1}^c}\Big\}.
			\label{eq:core_bound_ts}
		\end{equation}
	\end{small}
	According to \myeqref{eq:bound_arm1_hard} and \myeqref{eq:core_bound_ts}, $\sum_{t=1}^{\infty}\myprob\{I_t=i,E_i^\theta(t),B_t > 0\}$ can be bounded by the sum of the right-hand side of \myeqref{eq:core_bound_ts} over index $k$ from $0$ to infinity, which is related to the pulling time of arm $1$ and its $\epsilon$-ratio gaps. 
	
	It is quite intuitive that when arm $1$ is played for enough times, $\theta_1^r(t)$ and $\theta_1^c(t)$ will be very close to $\mu_1^r$ and $\mu_1^c$ respectively. That is, probabilities $p^r_{i,\tau_k+1}$ and $p^c_{i,\tau_k+1}$ will be close to $1$, and so will their reciprocals.
	To mathematically characterize $p^r_{i,\tau_k+1}$ and $p^c_{i,\tau_k+1}$, we define some notations as follows, which are directly or indirectly related to the $\epsilon$-ratio gap:  $y_i = \mu_1^r - \epsilon_i$, $z_i=\mu_1^c + \epsilon_i$, $R_{1,i}=\frac{\mu_1^r(1-y_i)}{y_i(1-\mu_1^r)}$, $R_{2,i}=\frac{\mu_1^c(1-z_i)}{z_i(1-\mu_1^c)}$, $D_{1,i} = y_i\ln(\frac{y_i}{\mu_1^r}) + (1 - y_i)\ln(\frac{1-y_i}{1-\mu_1^r})$ and 
	$D_{2,i} = z_i\ln(\frac{z_i}{\mu_1^c}) + (1 - z_i)\ln(\frac{1-z_i}{1-\mu_1^c})$.
	
	Based on the above notations and discussions, we can obtain the  following results regarding the right-hand side of \myeqref{eq:core_bound_ts}: $\forall i > 1$ and $k \geq 1$
	\begin{small}
		\begin{align}
			& \myexp\Big\{\frac{1}{p^r_{i , \tau_k + 1}}\Big\} \leq 1 + \Theta\bigg(\frac{3R_{1,i}e^{-D_{1,i}k}}{y_i(1-y_i)(k+1)(R_{1,i}-1)^2} + e^{-2\epsilon^2_ik}\nonumber \\
			&+ \frac{1 + R_{1,i}}{1 - y_i}e^{-D_{1,i}k} + e^{-\frac{1}{2}k\epsilon^2_i} +  \frac{1}{\exp\{\frac{\epsilon^2_ik^2}{2(k+1)}\}-1}\bigg);\label{lemma:bound_reward_for_arm1}
		\end{align}
	\end{small}
	If $z_i \ge 1$, $\myexp\{\frac{1}{p^c_{i,\tau_k+1}}\} = 1$; otherwise,
	\begin{small}
		\begin{align}
			& \myexp\Big\{\frac{1}{p^c_{i,\tau_k+1}}\Big\} \leq 1 + \Theta\bigg(\frac{2e^{-D_{2,i}k}}{z_i(1-z_i)(1-R_{2,i})^2} + e^{-2\epsilon^2_ik} \nonumber\\
			+& \frac{1}{z_iR_{2,i}}e^{-D_{2,i}k} + e^{-\frac{1}{2}\epsilon^2_ik} + \frac{1}{\exp\{\frac{\epsilon_i^2k^2}{2(k+1)}\}-1}\bigg).\label{lemma:bound_cost_for_arm1}
		\end{align}
	\end{small}
	Specifically, if $z_i\geq 1$, $\myexp[\frac{1}{p_{i,\tau_0 + 1}}]\le\frac{1}{1-y_i}$; otherwise
	$\myexp[\frac{1}{p_{i,\tau_0 + 1}}]\le\frac{1}{(1-y_i)z_i}$.
	%
	%
	The derivations of \myeqref{lemma:bound_reward_for_arm1} and \myeqref{lemma:bound_cost_for_arm1} need tight estimations of partial binomial sums and careful algebraic operations, which can be found at Appendix \ref{subapp:boundcomplex1} and \ref{subapp:boundcomplex2}.
	
	According to \myeqref{eq:bound_arm1_hard} and \myeqref{eq:core_bound_ts}, to bound $\sum_{t=1}^{\infty}\myprob\{I_t=i,E_i^\theta(t),B_t>0\}$, we only need to multiply each term in \myeqref{lemma:bound_reward_for_arm1} by each one in \myeqref{lemma:bound_cost_for_arm1}, and sum up all the multiplicative terms over $k$ from $0$ to $\infty$ except the constant $1$.  Using Taylor series expansion, we can verify that w.r.t. $\gamma$, 
	{\small{$$\frac{1}{D_{1,i}} = O\Big(\frac{1}{\epsilon_i^2(\gamma)}\Big), \frac{3R_{1,i}}{y_i(1-y_i)(R_{1,i}-1)^2} = O\Big(\frac{1}{\epsilon_i^2(\gamma)}\Big).$$}}
	If $\epsilon_i(\gamma) + \mu_1^c \ge 1$, we have that w.r.t. $\gamma$,
	\begin{small}
		\begin{equation}
			\sum_{k=0}^{\infty}\Big(\myexp\Big\{\frac{1}{p_{i,\tau_k+1}}\Big\}-1\Big)= O\Big(\frac{1}{\epsilon_i^4(\gamma)}\Big).
			\label{eq:low_prob_event_1}
		\end{equation}
	\end{small}
	If $\epsilon_i(\gamma) + \mu_1^c < 1$, we can obtain that w.r.t $\gamma$,
	\begin{small}
		\begin{equation}
			\sum_{k=0}^{\infty}\Big(\myexp\Big\{\frac{1}{p_{i,\tau_k+1}}\Big\}-1\Big)= O\Big(\frac{1}{\{1-\mu^c_1-\epsilon_i(\gamma)\}\epsilon_i^6(\gamma)}\Big).
			\label{eq:low_prob_event_2}
		\end{equation}
	\end{small}
	Note that the constants in the $O(\cdot)$ of \myeqref{eq:low_prob_event_1} and \myeqref{eq:low_prob_event_2} do not depend on $B$ (but depend on $\mu_i^r$ and $\mu_i^c$ $\forall i \in [K]$).\\
	\emph{\textbf{Step 4}: Bound $\myexp\{T_i\}$ $\forall i\ge2$ for Bernoulli bandits.}
	
	Combining \myeqref{eq:basicDecomposition}, \myeqref{eq:boundthefirsttermAB}, \myeqref{eq:low_prob_event_1} and \myeqref{eq:low_prob_event_2}, we can get the following result:
	\begin{small}
		\begin{align}
			& \myexp\{T_i\}   \leq 1 + \frac{2\ln B}{\delta_i^2(\gamma)} +  \frac{35}{\delta_i^2(\gamma)\mu^c_{\min}} + \Phi_i(\gamma) \nonumber\\
			& \leq 1+\frac{2\ln B}{\gamma^2(\mu_i^c\Delta_i)^2}\bigg(\frac{\mu_1^r}{\mu_1^c}+1\bigg)^2 + O\Big(\frac{1}{\gamma^2}\Big) + \Phi_i(\gamma),	\label{eq:pullingtimeofts_i}
		\end{align}
	\end{small}
	in which $\Delta_i$ is defined in \myeqref{def:Delta} and $\Phi_i(\gamma)$ is defined in \myeqref{eq:BigPhi}.
	
	According to Lemma \ref{lemma:RegretBound_GeneralDistribution}, we can eventually obtain the regret bound  of Budgeted Thompson Sampling as shown in Theorem \ref{thm:bound_TS_asymp} by first multiplying {\small$\mu_i^c\Delta_i$} on the right of \myeqref{eq:pullingtimeofts_i} and then summing over $i$ from $2$ to $K$.
	\subsection{Analysis for General Bandits}\label{subsec:generalExternsion}			
	The regret bound we obtained for Bernoulli bandits in the previous subsection also works for general bandits, as shown in Theorem \ref{thm:bound_TS_asymp}.
	
	The result for general bandits is a little surprising since the problem of general bandits seems more difficult than the Bernoulli bandit problem, and one may expect a slightly looser asymptotic regret bound. The reason why we can retain the same regret bound lies in the Bernoulli trials of the general bandits. Intuitively, the Bernoulli trials can be seen as the intermediate that can transform the general bandits to Bernoulli bandits while keeping the expected reward and cost of each arm unchanged. Therefore, when $B$ is large, there should not be too many differences in the  regret bound between the Bernoulli bandits and general bandits.
	
	Specifically, similar to the case of Bernoulli bandits, in order to bound the regret of the BTS algorithm for the general bandits, we only need to bound $\myexp\{T_i\}$ (according to inequality \myeqref{eq:AGenaralOfTheBerBandit}). To bound $\myexp\{T_i\}$, we also need four steps similar to those described in the previous subsection. In addition, we need one extra step which is related to the Bernoulli trials. Details are described as below.\\
	\emph{$\mathcal{S}0$: Obtain the success probabilities of the Bernoulli trials.} Denote the reward and cost of arm $i$ at round $t$ as $r_i(t)$ and $c_i(t)$ respectively. Denote the Bernoulli trial results of arm $i$ at round $t$ as $\tilde{r}_i(t)$ (for reward) and $\tilde{c}_i(t)$ (for cost). We need to prove $\myprob\{\tilde{r}_i(t)=1\}=\mu_i^r$ and $\myprob\{\tilde{c}_i(t)=1\}=\mu_i^c$, which is straightforward:
	\begin{small}
		\begin{equation*}
			\begin{aligned}
				&\myprob\{\tilde{r}_i(t)=1\} = \myexp\{\myexp[\bm{1}\{\tilde{r}_i(t)=1\} | r_i(t)]\} = \myexp[r_i(t)] = \mu_i^r,\\
				&\myprob\{\tilde{c}_i(t)=1\} = \myexp\{\myexp[\bm{1}\{\tilde{c}_i(t)=1\} | c_i(t)]\} = \myexp[c_i(t)] = \mu_i^c.
			\end{aligned}
		\end{equation*}
	\end{small}
	\emph{$\mathcal{S}1$: Decompose $\myexp\{T_i\}$}: This step is the same as Step 1 in the Bernoulli bandit case. For the general bandit case, $\myexp\{T_i\}$ can also be bounded by inequality \myeqref{eq:basicDecomposition}.\\
	\emph{$\mathcal{S}2$: Bound {$\sum_{t=1}^{\infty}\myprob\{\overline{E_i^\theta(t)},n_{i,t}\ge \lceil L_i\rceil, B_t > 0 \}$}.} $\mathcal{S}2$ is almost the same as Step 2 in the proof for Bernoulli bandits but contains some minor changes.  For the general bandits, we have $c_i(t)\in[0,1]$ rather than $c_i(t)\in\{0,1\}$. Then we have {$\sum_{t=1}^{\infty}\myprob\{B_t > 0\} \leq \frac{B+1}{\mu^c_{\min}}$}, and can get a similar result to \myeqref{eq:boundthefirsttermAB}.\\
	\emph{$\mathcal{S}3$: Bound $\sum_{t=1}^{\infty}\myprob\{I_t=i,E_i^\theta(t),B_t > 0\}$.} Since we have already got the success probabilities of the Bernoulli trials, this step is the same as Step 3 for the Bernoulli bandits.\\
	\emph{$\mathcal{S}4$: }Substituting the results of $\mathcal{S}2$ and $\mathcal{S}3$ into the corresponding terms in \myeqref{eq:basicDecomposition}, we can get an upper bound of $\myexp\{T_i\}$ for the general bandits. Then according to \myeqref{eq:AGenaralOfTheBerBandit}, for general bandits, the results in Theorem \ref{thm:bound_TS_asymp} can be eventually obtained.
	
	The classical MAB problem in \cite{UCB1} can be regarded as a special case of the budgeted MAB problem by setting $c_i(t)=1$ $\forall i\in[K],t\ge1$, and $B$ is the maximum pulling time. Therefore, according to \cite{lai1985asymptotically}, we can verify the order of the distribution-dependent regret bound of the budgeted MAB problem is $O(\ln B)$. Compared with the two algorithms in \cite{ding2013multi}, we have the following results:
	\begin{remark}
		By setting $\gamma=\frac{1}{\sqrt{2}}$ in Theorem \ref{thm:bound_TS_asymp}, we can see that BTS gets a tighter asymptotic regret bound in terms of the constants before $\ln B$ than the two algorithms proposed in \cite{ding2013multi}.
		\label{remark:comparedwithwenkui}
	\end{remark}

\iftrue
\section{Numerical Simulations}

\begin{figure*}[!ht]
	\centering
	\begin{minipage}[t]{0.5\linewidth}
		\subfigure[Bernoulli, $10$ arms]{
			\includegraphics[width = \linewidth]{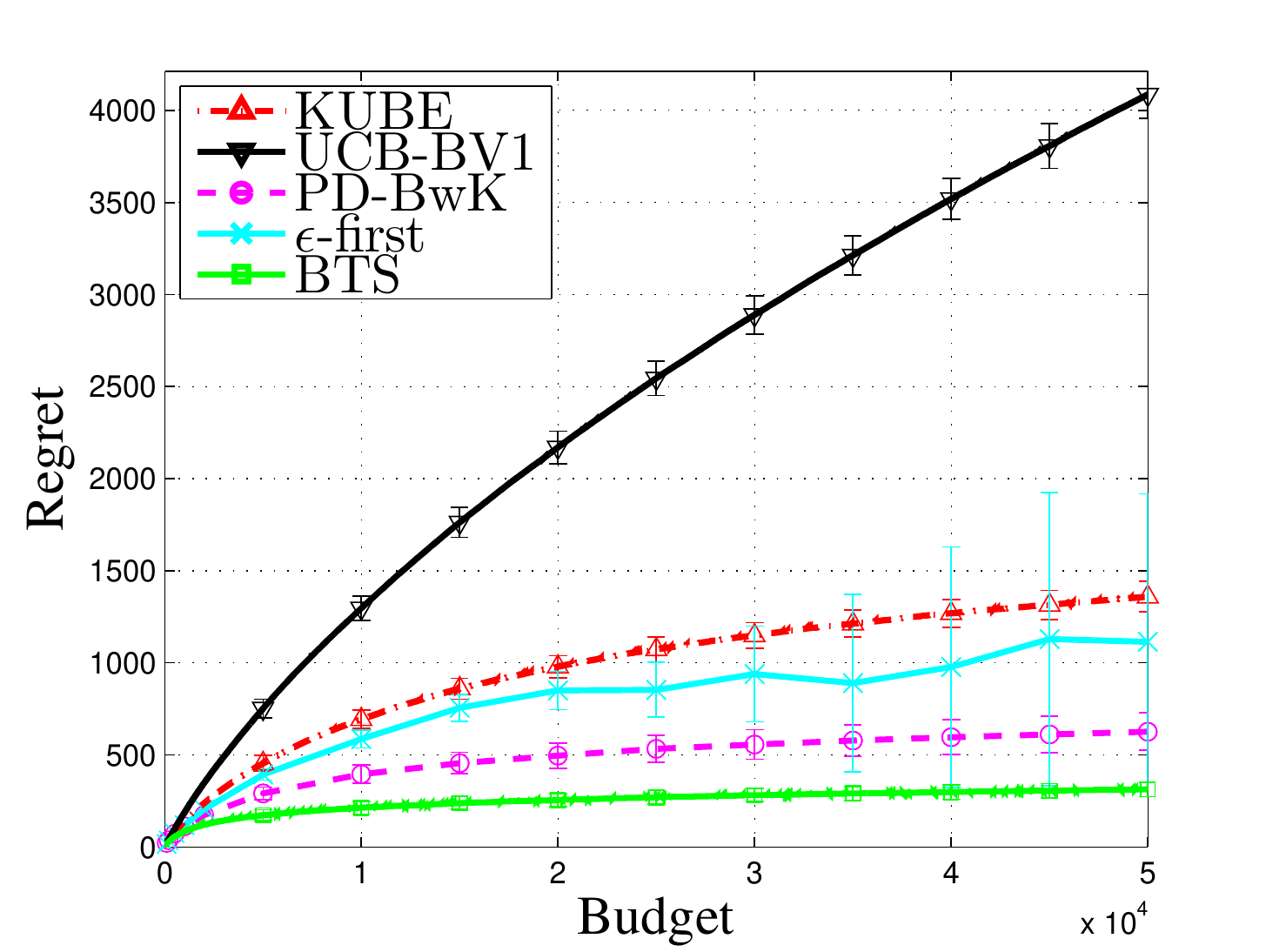}
			\label{subfig_10bern}
		}
	\end{minipage}%
	\begin{minipage}[t]{0.5\linewidth}
		\subfigure[Bernoulli, $100$ arms]{
			\includegraphics[width = \linewidth]{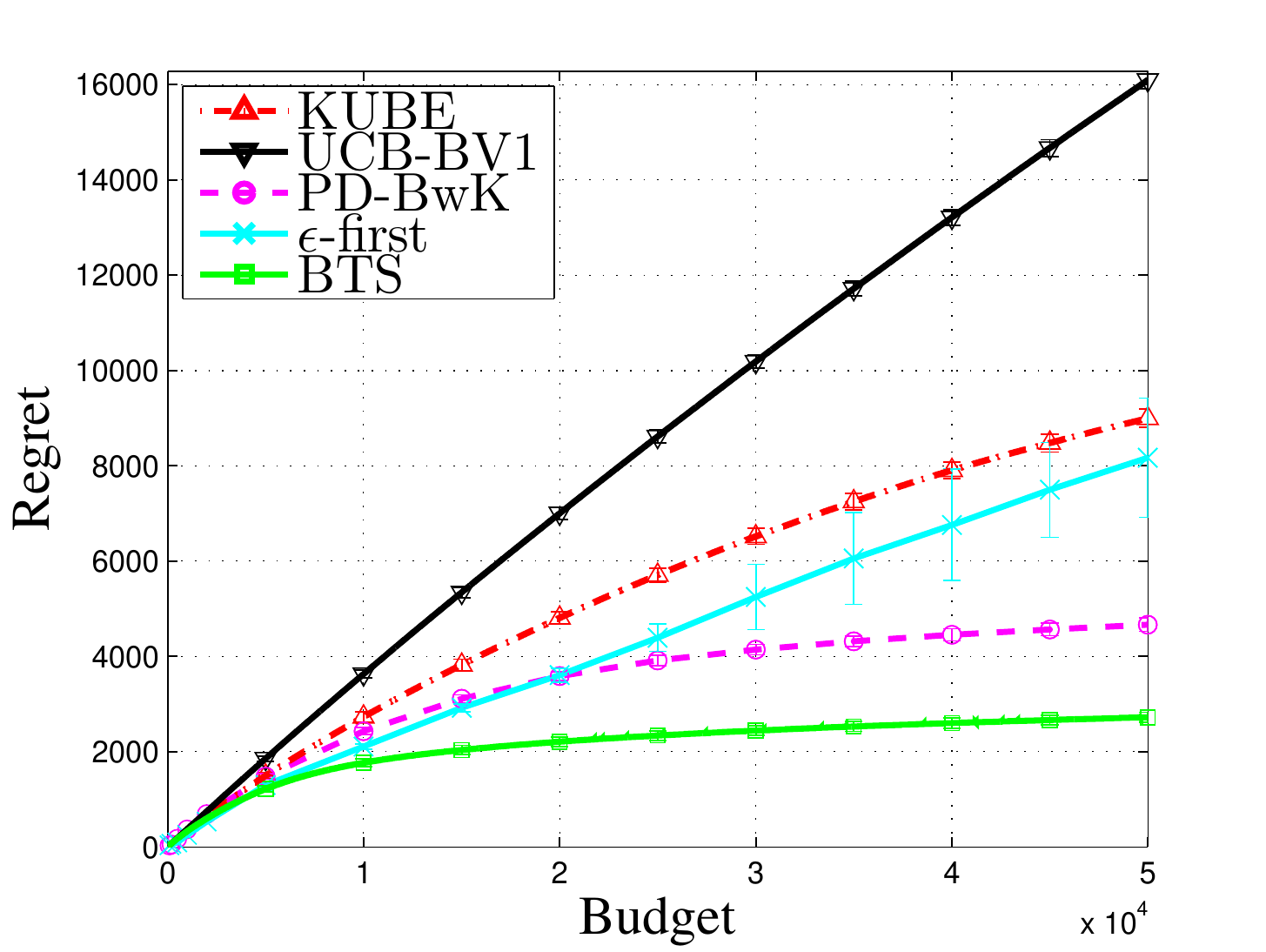}
			\label{subfig_100bern}
		}
	\end{minipage}
	\begin{minipage}[t]{0.5\linewidth}
		\subfigure[Multinomial, $10$ arms]{
			\includegraphics[width = \linewidth]{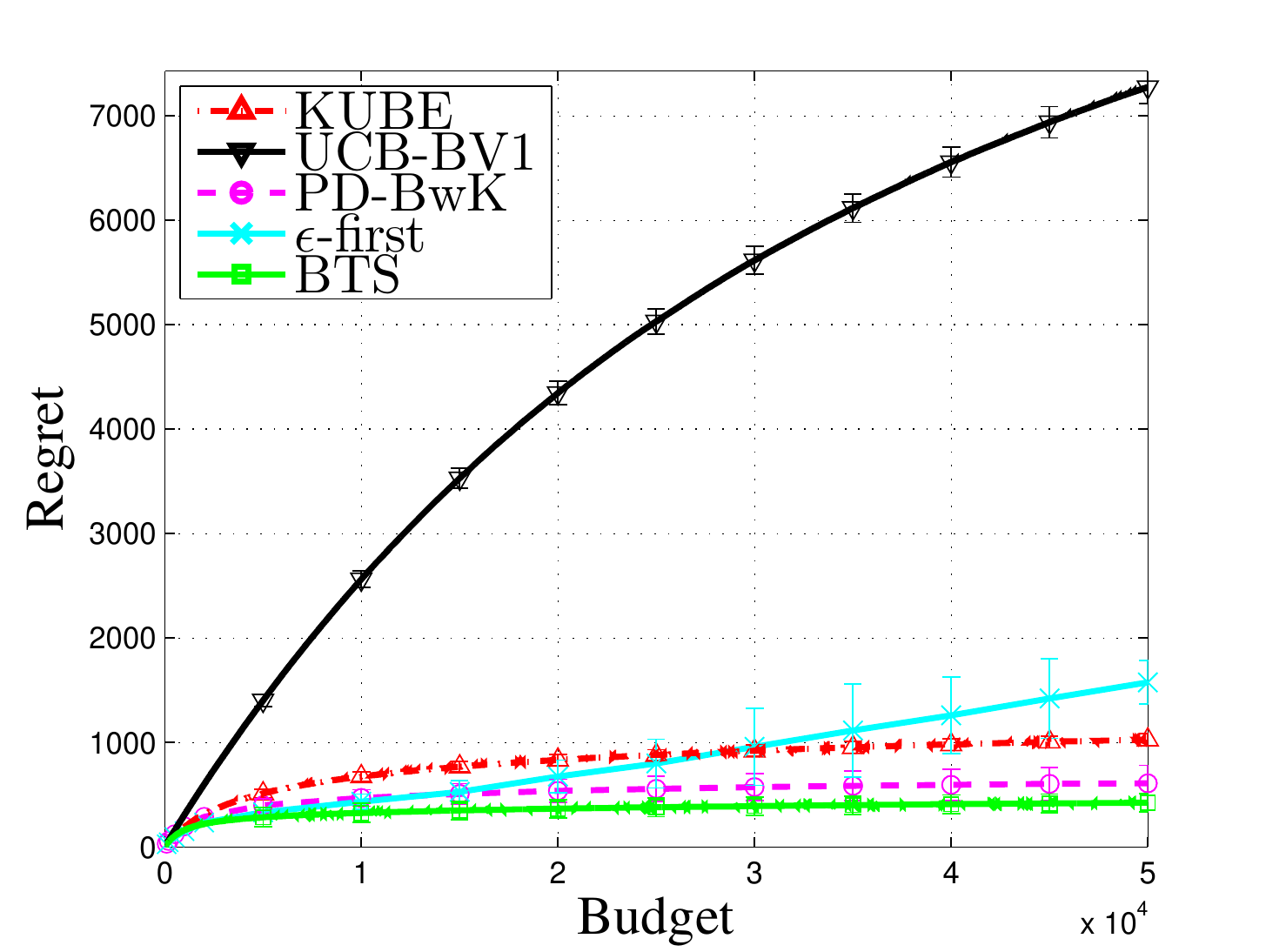}
			\label{subfig_10multi}
		}
	\end{minipage}%
	\begin{minipage}[t]{0.5\linewidth}
		\subfigure[Multinomial, $100$ arms]{
			\includegraphics[width = \linewidth]{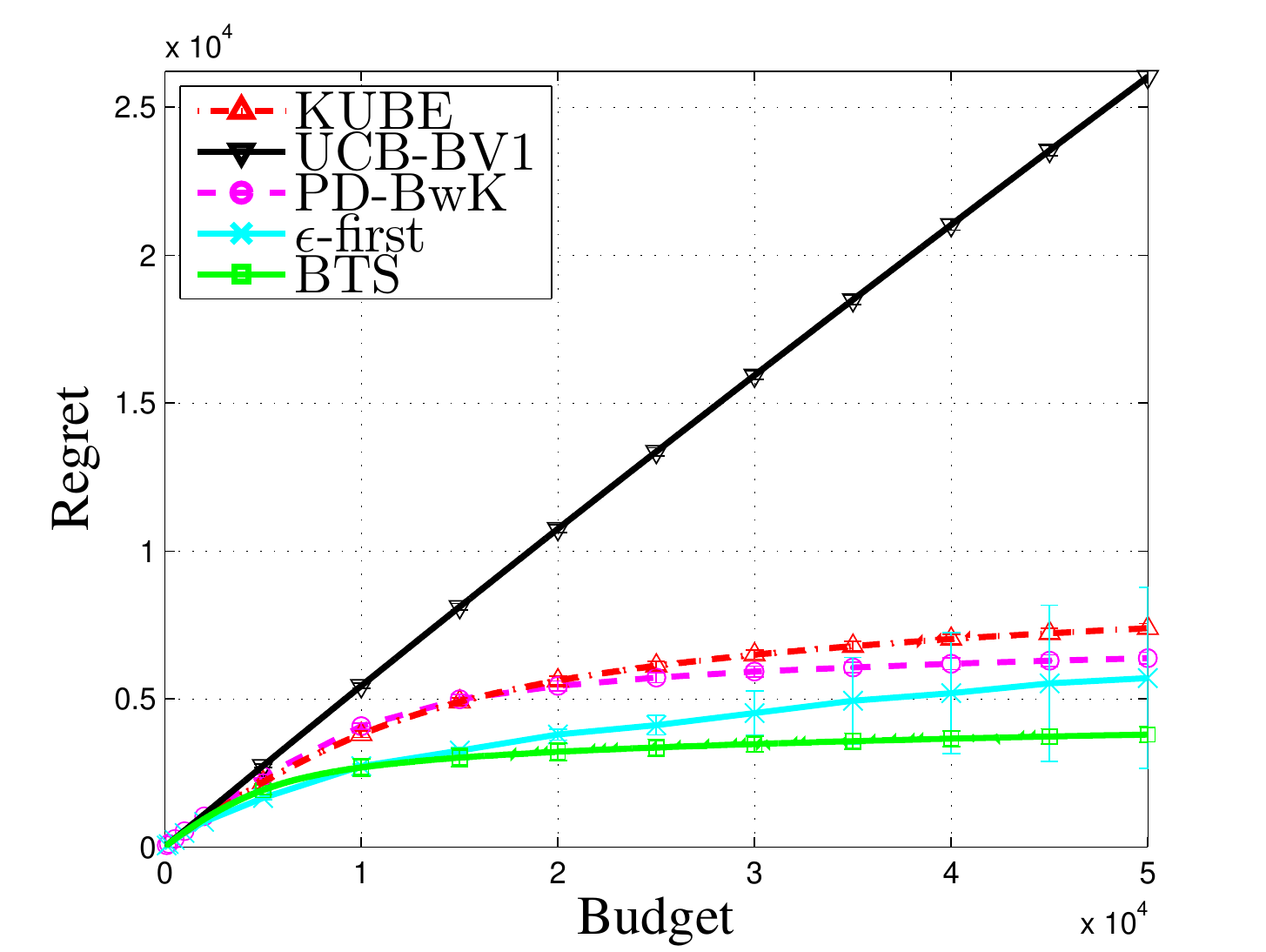}
			\label{subfig_100multi}
		}
	\end{minipage}
	\caption{Regrets under different bandit settings}
	\label{fig:regrets}
\end{figure*}
In addition to the theoretical analysis of the BTS algorithm, we are also interested in its empirical performance. We conduct a set of experiments to test the empirical performance of BTS algorithm and present the results in this section.

For comparison purpose, we implement four baseline algorithms: (1) the $\epsilon$-first algorithm \cite{epsilon_first} with $\epsilon=0.1$; (2) a variant of the PD-BwK algorithm \cite{badanidiyuru2013bandits}: at each round, pull the arm with the maximum $\frac{\min\{\overline{r}_{i,t} + \varphi(\overline{r}_{i,t},n_{i,t}),1\}}{\max\{\overline{c}_{i,t} - \varphi(\overline{c}_{i,t},n_{i,t}),0\}}$, in which $\overline{r}_{i,t}$ ($\overline{c}_{i,t}$) is the average reward (cost) of arm $i$ before round $t$, $\varphi(x,N)=\sqrt{\frac{\nu x}{N}} + \frac{\nu}{N}$ and $\nu=0.25\log(BK)$; (3) the UCB-BV1 algorithm \cite{ding2013multi}; (4) a variant of the KUBE algorithm \cite{KUBE}: at round $t$, pull the arm with the maximum ratio $\big(\overline{r_{i,t}} + \sqrt{\frac{2\ln t}{n_{i,t}}}\big)\big/\overline{c_{i,t}}$.
$\epsilon$-first and PD-BwK need to know $B$ in advance, and thus we try several budgets as $\{100,200,500,1K,2K,5K,10K,15K,20K,\cdots,50K\}$. BTS and UCB-BV1 do not need to know $B$ in advance, and thus by setting $B=50K$ we can get their empirical regrets for every budget smaller than $50K$.

We simulate bandits with two different distributions: one is Bernoulli distribution (simple), and the other is multinomial distribution (complex). Their parameters are randomly chosen. For each distribution, we simulate a $10$-armed case and a $100$-armed case. We then independently run the experiments for $500$ times and report the average performance of each algorithm.

The average regret and the standard deviation of each algorithm over 500 random runs are shown in Figure \ref{fig:regrets}.  From the figure we have the following observations:
\begin{itemize}
	\item For both the Bernoulli distribution and the multinomial distribution, and for both the 10-arm case and 100-arm case, our proposed BTS algorithm has clear advantage over the baseline methods: It achieves the lowest regrets. Furthermore, the standard deviation of the regrets of BTS over 500 runs is  small, indicating that its performance is very stable across different random run of the experiments.
	\item As the number of arms increases (from 10 to 100), the regrets of all the algorithms increase, given the same budget. This is easy to understand because more budget is required to make good explorations on more arms.
	\item 	The standard deviation of the regrets of the $\epsilon$-first algorithm is much larger than the other algorithms, which shows that $\epsilon$-first is not stable under certain circumstances. Take the $10$-armed Bernoulli bandit for example: when $B=50K$, during the $500$ random runs, there are $13$ runs that $\epsilon$-first cannot identify the optimal arm. The average regret over the $13$ runs is $4630$. However, over the other $487$ runs, the average regret of $\epsilon$-first is $1019.9$. Therefore, the standard derivation of $\epsilon$-first is large. In comparison, the BTS algorithm is much more stable.
\end{itemize}

Overall speaking, the simulation results demonstrate the effectiveness of our proposed Budgeted Thompson Sampling algorithm.

\fi

\section{Conclusion and Future work}
In this paper, we have extended the Thompson sampling algorithm to the budgeted MAB problems. We have proved that our proposed algorithm has a distribution-dependent regret bound of $O(\ln B)$. We have also demonstrated its empirical effectiveness using several numerical simulations.

For future work, we plan to investigate the following aspects: (1) We will study the distribution-free regret bound of Budgeted Thompson Sampling. (2) We will try other priors (e.g., the Gaussian prior) to see whether a better regret bound and empirical performance can be achieved in this way. (3) We will study the setting that the reward and the cost are correlated (e.g., an arm with higher reward is very likely to have higher cost).

\bibliographystyle{named}
\bibliography{mybib}

\clearpage
\onecolumn

\appendix
\section{Appendix: Some Important Facts}
\begin{fact}[Chernoff-Hoeffding Bound, \cite{UCB1}]
Let $X_1,\cdots,X_n$ be random variables with common range $[0,1]$ and such that $\myexp[X_t|X_1,\cdots,X_{t-1}]=\mu$. Let $S_n=X_1+\cdots+X_n$. Then for all $a \geq 0$,
\begin{small}
\begin{equation}
\myprob\{S_n \ge n\mu + a\} \leq e^{-\frac{2a^2}{n}};\;\myprob\{S_n \le n\mu - a\} \leq e^{-\frac{2a^2}{n}}.
\end{equation}
\end{small}
\label{fact:hoeffdinginequality}
\end{fact}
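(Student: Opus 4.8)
The plan is to use the Chernoff (exponential Markov) method, adapted to accommodate the fact that the $X_t$ are \emph{not} independent but only satisfy the conditional-mean condition $\myexp[X_t \mid X_1,\dots,X_{t-1}] = \mu$. First I would fix $\lambda > 0$ and apply Markov's inequality to the nonnegative random variable $e^{\lambda S_n}$:
$$\myprob\{S_n \ge n\mu + a\} = \myprob\{e^{\lambda S_n} \ge e^{\lambda(n\mu+a)}\} \le e^{-\lambda(n\mu+a)}\,\myexp[e^{\lambda S_n}].$$
The entire problem then reduces to controlling the moment generating function $\myexp[e^{\lambda S_n}]$, after which I will optimize over $\lambda$.

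Second, because the summands are dependent, I would peel off the terms one at a time using the tower property, conditioning on the history $X_1,\dots,X_{t-1}$. Writing $S_n = S_{n-1} + X_n$ and factoring the history-measurable quantity $e^{\lambda S_{n-1}}$ out of the inner conditional expectation gives
$$\myexp[e^{\lambda S_n}] = \myexp\big[e^{\lambda S_{n-1}}\,\myexp[e^{\lambda X_n}\mid X_1,\dots,X_{n-1}]\big].$$
The single-variable estimate needed here is a conditional form of Hoeffding's lemma: for any $X_t \in [0,1]$ with conditional mean $\mu$, $\myexp[e^{\lambda X_t}\mid X_1,\dots,X_{t-1}] \le e^{\lambda\mu + \lambda^2/8}$. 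Granting this, each peeling step contributes the deterministic factor $e^{\lambda\mu+\lambda^2/8}$, and iterating from $t=n$ down to $t=1$ yields $\myexp[e^{\lambda S_n}] \le e^{n\lambda\mu + n\lambda^2/8}$.

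Third, I would establish the conditional Hoeffding lemma, which is the technical heart. By convexity of $x\mapsto e^{\lambda x}$ on $[0,1]$, the chord bound gives the pointwise inequality $e^{\lambda X_t} \le (1-X_t) + X_t e^{\lambda}$; taking the conditional expectation and using $\myexp[X_t\mid\cdot]=\mu$ yields $\myexp[e^{\lambda X_t}\mid\cdot] \le (1-\mu) + \mu e^{\lambda}$. It then remains to show $\psi(\lambda) := \ln\big((1-\mu)+\mu e^{\lambda}\big) \le \lambda\mu + \tfrac{1}{8}\lambda^2$. I would do this by Taylor's theorem with remainder around $\lambda=0$, using $\psi(0)=0$, $\psi'(0)=\mu$, and the uniform bound $\psi''(\lambda) = p(1-p) \le \tfrac14$, where $p = \mu e^{\lambda}/((1-\mu)+\mu e^{\lambda})\in(0,1)$.

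Finally, combining the pieces gives $\myprob\{S_n\ge n\mu+a\} \le e^{-\lambda a + n\lambda^2/8}$ for every $\lambda>0$, and I would choose $\lambda = 4a/n$, which makes the exponent equal to $-2a^2/n$ and produces the upper-tail bound. The lower-tail bound follows by symmetry: applying the upper-tail result to $Y_t := 1-X_t$, which also lie in $[0,1]$ and satisfy $\myexp[Y_t\mid\cdot]=1-\mu$, converts $\myprob\{S_n \le n\mu - a\}$ into an upper-tail statement for $\sum_t Y_t$ and gives the same exponential bound. I expect the main obstacle to be establishing the $\lambda^2/8$ factor cleanly — the convexity/chord step is routine, but the second-derivative bound $\psi''\le 1/4$ and its use in the Taylor remainder is precisely where the constant $2$ in the exponent originates, so care is needed there; conceptually, the other delicate point is handling the dependence via iterated conditioning without ever invoking independence.
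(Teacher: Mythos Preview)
Your proof is correct and follows the standard Chernoff--Hoeffding route: exponential Markov, iterated conditioning to peel off factors, the convexity/chord argument plus the $\psi''\le 1/4$ bound to get Hoeffding's lemma, optimization at $\lambda=4a/n$, and the reflection $Y_t=1-X_t$ for the lower tail. There is nothing to fault.

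However, there is nothing to compare against: the paper does not prove this statement at all. Fact~\ref{fact:hoeffdinginequality} is simply quoted from \cite{UCB1} as a known tool and used as a black box in Appendices~\ref{app:proof_low:AA} and~\ref{subapp:proofoflowprobevent12}. So your write-up supplies strictly more than the paper does, and the ``approach'' question is moot.
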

Throughout the appendices, let $F^{Beta}_{\alpha,\beta}(\cdot)$ denote the cdf of a beta distribution with parameters $\alpha$ and $\beta$. (In our analysis, $\alpha$ and $\beta$ are two integers.) Let $F^B_{n,p}(\cdot)$ denote the cdf the binomial distribution, in which $n (\in \mathbb{Z}_+)$ is the number of the Bernoulli trials and $p$ is  the success probability of each trial.
\begin{fact}
For any positive integer $\alpha$ and $\beta$,
\begin{equation}
  F_{\alpha,\beta}^{Beta}(y) = 1 - F_{\alpha+\beta-1 , y}^B(\alpha-1).
\end{equation}
\label{fact:beta_binomial}
\end{fact}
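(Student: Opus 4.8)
The plan is to prove the identity by reading the left-hand side as the regularized incomplete beta function and showing it agrees with the binomial-tail expression on the right by matching derivatives in $y$. Writing $n = \alpha+\beta-1$ and using $\sum_{j=0}^{n}\binom{n}{j}y^j(1-y)^{n-j}=1$, the right-hand side $1 - F^B_{n,y}(\alpha-1)$ equals the upper tail $\sum_{j=\alpha}^{n}\binom{n}{j}y^j(1-y)^{n-j}$. So the claim reduces to
\[
\frac{1}{B(\alpha,\beta)}\int_0^y t^{\alpha-1}(1-t)^{\beta-1}\,dt = \sum_{j=\alpha}^{n}\binom{n}{j}y^j(1-y)^{n-j},
\]
where $B(\alpha,\beta)=(\alpha-1)!(\beta-1)!/(\alpha+\beta-1)!$ for integer parameters.

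First I would check that both sides vanish at $y=0$: the integral is empty, and every summand on the right carries a factor $y^j$ with $j\ge\alpha\ge1$. It then suffices to show the two sides have equal derivatives on $(0,1)$. The derivative of the left-hand side is immediate from the fundamental theorem of calculus, namely $y^{\alpha-1}(1-y)^{\beta-1}/B(\alpha,\beta)$.

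The core step is differentiating the binomial tail and spotting a telescoping collapse. Using $\binom{n}{j}j = n\binom{n-1}{j-1}$ and $\binom{n}{j}(n-j)=n\binom{n-1}{j}$, the derivative of the $j$-th summand becomes $a_j-a_{j+1}$ with $a_j := n\binom{n-1}{j-1}y^{j-1}(1-y)^{n-j}$. Summing over $j$ from $\alpha$ to $n$ telescopes to $a_\alpha - a_{n+1}$; the term $a_{n+1}$ vanishes because $\binom{n-1}{n}=0$, leaving $a_\alpha = n\binom{n-1}{\alpha-1}y^{\alpha-1}(1-y)^{\beta-1}$ (using $n-\alpha=\beta-1$). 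I would then verify $n\binom{n-1}{\alpha-1} = (\alpha+\beta-1)!/\{(\alpha-1)!(\beta-1)!\} = 1/B(\alpha,\beta)$, so the two derivatives coincide and the identity follows.

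The main obstacle is only mild bookkeeping: getting the index shifts exactly right so consecutive summands cancel, and confirming the surviving boundary term $a_\alpha$ reproduces precisely the Beta-function normalization. As a cross-check and fallback I would keep in reserve an induction on $\beta$ via integration by parts of $\int_0^y t^{\alpha-1}(1-t)^{\beta-1}\,dt$, which lowers $\beta$ by one and generates exactly the extra binomial term; the base case $\beta=1$ gives $F^{Beta}_{\alpha,1}(y)=y^\alpha = \myprob\{\mathrm{Bin}(\alpha,y)=\alpha\} = 1 - F^B_{\alpha,y}(\alpha-1)$ directly.
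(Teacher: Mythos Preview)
Your proof is correct, but your primary route differs from the paper's. The paper proves the identity directly by repeated integration by parts on
\[
\int_0^y \frac{(\alpha+\beta-1)!}{(\alpha-1)!(\beta-1)!}\,t^{\alpha-1}(1-t)^{\beta-1}\,dt,
\]
integrating the power of $t$ up and differentiating the power of $(1-t)$ down; each step peels off one boundary term $\binom{\alpha+\beta-1}{\alpha+k}y^{\alpha+k}(1-y)^{\beta-1-k}$ until the integral is exhausted, and the accumulated boundary terms are exactly the upper binomial tail. Your argument instead matches the two sides at $y=0$ and then matches their $y$-derivatives, using the identities $\binom{n}{j}j=n\binom{n-1}{j-1}$ and $\binom{n}{j}(n-j)=n\binom{n-1}{j}$ to make the derivative of the tail telescope to a single term that equals the Beta density. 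This is a genuinely different (and arguably cleaner) organization: you avoid iterating an integration step $\beta-1$ times and replace it by one telescoping sum, at the cost of a small bookkeeping check that $n\binom{n-1}{\alpha-1}=1/B(\alpha,\beta)$. Your fallback---induction on $\beta$ via a single integration by parts---is essentially the paper's argument written recursively rather than unrolled.
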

\begin{proof}
\begin{small}
\begin{equation*}
\begin{aligned}
 F_{\alpha,\beta}^{Beta}(y) & =  \int_{0}^{y}\frac{(\alpha + \beta - 1)!}{(\alpha - 1)!(\beta - 1)!}t^{\alpha - 1}(1 - t)^{\beta - 1}\mathrm{d}t
= \frac{(\alpha + \beta - 1)!}{\alpha!(\beta-1)!}y^{\alpha}(1-y)^{\beta - 1} + \int_{0}^{y}\frac{(\alpha+\beta-1)!}{\alpha!(\beta-2)!} t^\alpha(1-t)^{\beta-2}\mathrm{d}t \\
& = \cdots = \frac{(\alpha + \beta - 1)!}{\alpha!(\beta-1)!}y^{\alpha}(1-y)^{\beta - 1} + \frac{(\alpha + \beta - 1)!}{(\alpha + 1)!(\beta-2)!}y^{\alpha+1}(1-y)^{\beta - 2} + \cdots + y^{\alpha + \beta - 1} \\
& = \sum_{k=0}^{\beta-1}\binom{\alpha+\beta-1}{\alpha+k}y^{\alpha + k}(1-y)^{\beta - 1 - k} = 1 - F^B_{\alpha+\beta-1,y}(\alpha-1).
\end{aligned}
\end{equation*}
\end{small}
\end{proof}
\begin{fact}
  \begin{equation}
  F_{n+1,p}^B(r) = (1-p)F_{n,p}^B(r) + pF_{n,p}^B(r-1) \leq (1-p)F_{n,p}^B(r) + pF_{n,p}^B(r) \leq F_{n,p}^B(r).
  \end{equation}
  \label{fact:Binomial_inequality}
\end{fact}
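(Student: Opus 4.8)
The plan is to establish the central equality first and then read off the two inequalities from monotonicity of a cumulative distribution function. I would prove the recursion $F_{n+1,p}^B(r) = (1-p)F_{n,p}^B(r) + pF_{n,p}^B(r-1)$ probabilistically. Let $X_{n+1}$ be a Binomial$(n+1,p)$ variable and write $X_{n+1} = X_n + Y$, where $X_n$ is Binomial$(n,p)$ and $Y$ is an independent Bernoulli trial with success probability $p$. Conditioning on the outcome of $Y$ gives
\begin{small}
\begin{equation*}
F_{n+1,p}^B(r) = \myprob\{X_n + Y \le r\} = (1-p)\myprob\{X_n \le r\} + p\,\myprob\{X_n \le r-1\},
\end{equation*}
\end{small}
which is exactly the claimed equality, since $\myprob\{X_n \le r\} = F_{n,p}^B(r)$ and $\myprob\{X_n \le r-1\} = F_{n,p}^B(r-1)$. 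Equivalently, one can derive the same identity purely combinatorially from Pascal's rule $\binom{n+1}{k} = \binom{n}{k} + \binom{n}{k-1}$ applied term-by-term to the sum $\sum_{k=0}^{r}\binom{n+1}{k}p^k(1-p)^{n+1-k}$ defining $F_{n+1,p}^B(r)$; this keeps the argument self-contained within the cdf formulas and avoids probabilistic language entirely.

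Next I would obtain the first inequality from the fact that $F_{n,p}^B$, being a cdf, is nondecreasing in its argument; in particular $F_{n,p}^B(r-1) \le F_{n,p}^B(r)$. Substituting this bound into the $p\,F_{n,p}^B(r-1)$ term yields
\begin{small}
\begin{equation*}
(1-p)F_{n,p}^B(r) + p\,F_{n,p}^B(r-1) \le (1-p)F_{n,p}^B(r) + p\,F_{n,p}^B(r),
\end{equation*}
\end{small}
and the final inequality is then just the algebraic collapse $(1-p)F_{n,p}^B(r) + p\,F_{n,p}^B(r) = F_{n,p}^B(r)$, which completes the chain in the statement.

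There is no substantial obstacle here; the only point requiring a moment of care is the boundary case $r=0$, where $F_{n,p}^B(r-1) = F_{n,p}^B(-1)$ must be interpreted as the empty sum, equal to $0$. This is consistent with the convention that a binomial variable is nonnegative, so $\myprob\{X_n \le -1\} = 0$, and the recursion and both inequalities continue to hold verbatim. One should also note that $p \in [0,1]$, so the structure $(1-p)(\cdot) + p(\cdot)$ is a genuine convex combination with nonnegative weights, which is precisely what legitimizes the monotonicity substitution in the first inequality.
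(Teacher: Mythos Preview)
Your proof is correct. The paper states this fact without proof, so there is nothing to compare against; your conditioning-on-the-last-trial argument (with the Pascal's rule alternative) for the recursion, followed by monotonicity of the cdf and the algebraic collapse, is the standard and entirely adequate justification.
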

\begin{fact}
For all $p\in[0,1]$, $\delta > 0, n \in \mathbb{Z}_+$,
  \begin{equation}
  \begin{aligned}
  & F_{n,p}^B(np - n\delta) \leq e^{-2n\delta^2}, \\
  & 1 - F_{n , p}^B(np+n\delta) \leq e^{-2n\delta^2}. \\
  \end{aligned}
  \end{equation}
For all $p\in[0,1]$, $\delta > 0, n \in \mathbb{Z}_+$ and $n > \frac{1}{\delta}$,
 \begin{equation}
1 - F_{n+1,p}^B(np + n\delta) \leq e^{4\delta-2n\delta^2}.
 \end{equation}
\label{fact:Binomial_hoeff}
\end{fact}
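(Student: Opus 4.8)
The plan is to read $F^B_{n,p}$ as the distribution function of $S_n=X_1+\cdots+X_n$, where $X_1,\dots,X_n$ are i.i.d.\ Bernoulli random variables with success probability $p$. Each $X_t$ has range $\{0,1\}\subseteq[0,1]$ and mean $\myexp[X_t]=p$, so the Chernoff--Hoeffding bound of Fact \ref{fact:hoeffdinginequality} applies verbatim with $\mu=p$. All three claims will then follow by choosing the deviation $a$ appropriately and discarding only nonpositive remainder terms.

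For the first two inequalities I would simply instantiate Fact \ref{fact:hoeffdinginequality} with $a=n\delta\ge0$. For the lower tail, $F^B_{n,p}(np-n\delta)=\myprob\{S_n\le np-n\delta\}=\myprob\{S_n\le n\mu-a\}\le e^{-2a^2/n}=e^{-2n\delta^2}$. For the upper tail, $1-F^B_{n,p}(np+n\delta)=\myprob\{S_n> np+n\delta\}\le\myprob\{S_n\ge n\mu+a\}\le e^{-2a^2/n}=e^{-2n\delta^2}$. Both are immediate, the only point being to match the constant $2a^2/n=2n\delta^2$; no further work is needed.

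The real content is the third inequality, where the number of trials is $n+1$ but the deviation threshold is still scaled by $n$. The idea I would use is to peel off the last trial: writing $S_{n+1}=S_n+X_{n+1}$ with $X_{n+1}\le1$, the event $\{S_{n+1}\ge np+n\delta\}$ is contained in $\{S_n\ge np+n\delta-1\}$. Hence $1-F^B_{n+1,p}(np+n\delta)\le\myprob\{S_{n+1}\ge np+n\delta\}\le\myprob\{S_n\ge np+(n\delta-1)\}$. I would then apply Fact \ref{fact:hoeffdinginequality} to $S_n$ with $a=n\delta-1$; here the hypothesis $n>1/\delta$ is exactly what guarantees $a=n\delta-1>0$, so the bound is legitimate and yields $e^{-2(n\delta-1)^2/n}$.

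It then remains only to simplify the exponent: since $\tfrac{(n\delta-1)^2}{n}=n\delta^2-2\delta+\tfrac1n$, we get $e^{-2(n\delta-1)^2/n}=e^{-2n\delta^2+4\delta-2/n}\le e^{4\delta-2n\delta^2}$, where the final step discards the nonpositive term $-2/n$ and the additive constant $4\delta$ falls out of expanding $(n\delta-1)^2$. I expect this third part to be the main obstacle: the naive move of applying Fact \ref{fact:hoeffdinginequality} directly to $n+1$ trials forces the shift $a=n\delta-p$ and leaves a $p$-dependent residual that is not uniformly controllable over $p\in[0,1]$, whereas reducing to an $n$-trial bound whose natural deviation matches the threshold sidesteps this issue cleanly. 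The condition $n>1/\delta$ plays exactly the role of keeping the effective deviation $a$ nonnegative.
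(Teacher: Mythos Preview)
Your proposal is correct and follows essentially the same route as the paper: the first two inequalities are direct instantiations of Fact~\ref{fact:hoeffdinginequality} with $a=n\delta$, and for the third you reduce from $n+1$ to $n$ trials via the inclusion $\{S_{n+1}\ge r\}\subseteq\{S_n\ge r-1\}$, then apply Hoeffding with $a=n\delta-1$ and expand $(n\delta-1)^2/n=n\delta^2-2\delta+1/n$. The paper's only cosmetic difference is that it phrases the reduction step through the CDF recurrence $F^B_{n+1,p}(r)=(1-p)F^B_{n,p}(r)+pF^B_{n,p}(r-1)\ge F^B_{n,p}(r-1)$ rather than the probabilistic inclusion, but the resulting inequality and subsequent algebra are identical.
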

\begin{proof}
$\forall t\ge1$, let $X_t$ denote the result for the $t$-th Bernoulli trial, whose success probability $p$. $\{X_t\}_{t=1}^{n}$ are independent and identically distributed.
\begin{equation}
\begin{aligned}
& F_{n,p}^B(np-n\delta) = \myprob\bigg\{\sum_{t=1}^{n}X_t \leq np - n\delta\bigg\} = \myprob\bigg\{\sum_{t=1}^{n}X_t  - \myexp\bigg[\sum_{t=1}^{n}X_t\bigg] \leq - n\delta\bigg\} \leq e^{-2n\delta^2};\\
& 1 - F_{n,p}^B(np + n\delta) \leq \myprob\bigg\{\sum_{t=1}^{n}X_t \geq np + n\delta\bigg\} \leq \myprob\bigg\{\sum_{t=1}^{n}X_t - \myexp\bigg[\sum_{t=1}^{n}X_t\bigg]\geq  n\delta\bigg\} \leq e^{-2n\delta^2}.
\end{aligned}
\end{equation}
For the third term, we first declare that
\begin{equation}
F_{n+1,p}^B(np + n\delta) = (1 - p)F_{n,p}^B(np+n\delta) + pF_{n,p}^B(np + n\delta - 1) \ge F_{n,p}^B(np + n\delta - 1) .
\end{equation}
As a result,
\begin{equation}
1 - F_{n+1,p}^B(np + n\delta) \le 1 - F_{n,p}^B(np + n\delta - 1) \le e^{-2n(\delta - \frac{1}{n})^2} \le e^{4\delta - 2n\delta^2}.
\end{equation}
\end{proof}

\begin{fact}[Section B.3 of \cite{export:191857}]
For Binomial distribution,
\begin{enumerate}
\item If $s \leq y(j+1)-\sqrt{(j+1)y(1-y)}$, $F^B_{j+1,y}(s)=\Theta\left(\frac{y(j+1-s)}{y(j+1)-s}\binom{j+1}{s}y^s(1-y)^{j+1-s}\right)$;
\item If $s \geq y(j+1)-\sqrt{(j+1)y(1-y)}$, $F^B_{j+1,y}(s)=\Theta\left(1\right)$.
\end{enumerate}
Similarly, we can obtain
\begin{enumerate}
\item If $j-s\leq(1-y)(j+1)-\sqrt{(j+1)y(1-y)}$, $1 - F^B_{j+1,y}(s) = \Theta\left(\frac{(1-y)(s+1)}{(1-y)(j+1)-j+s}\binom{j+1}{j-s}(1-y)^{j-s}y^{s+1}\right)$;
\item If $j-s\geq(1-y)(j+1)-\sqrt{(j+1)y(1-y)}$, $1 - F^B_{j+1,y}(s) = \Theta\left(1\right)$.
\end{enumerate}
\end{fact}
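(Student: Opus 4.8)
The plan is to read the statement as two two-sided ($\Theta$) estimates for the lower tail $F^B_{j+1,y}(s)=\sum_{k=0}^{s}b_k$ of the binomial, where $b_k=\binom{j+1}{k}y^k(1-y)^{j+1-k}$, and I write $\mu=(j+1)y$ for the mean and $\sigma=\sqrt{(j+1)y(1-y)}$ for the standard deviation. The two cases split according to whether $s$ lies more than one standard deviation below $\mu$ (the tail regime, item~1) or within one standard deviation of, or above, $\mu$ (the bulk regime, item~2). The entire upper-tail half of the Fact I would then get for free from the reflection identity $1-F^B_{j+1,y}(s)=\myprob\{X\ge s+1\}=F^B_{j+1,1-y}(j-s)$, which follows from the map $X\mapsto(j+1)-X$ sending $\mathrm{Bin}(j+1,y)$ to $\mathrm{Bin}(j+1,1-y)$; substituting $y\mapsto 1-y$ and $s\mapsto j-s$ into the lower-tail estimates turns the two conditions and the two correction factors into \emph{exactly} the claimed upper-tail forms (in particular $\binom{j+1}{s}\mapsto\binom{j+1}{j-s}$, the exponents become $(1-y)^{j-s}y^{s+1}$, and the denominator becomes $(1-y)(j+1)-j+s$). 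So it suffices to prove items~1 and~2.

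For the tail regime I would first record the consecutive-term ratio $\frac{b_k}{b_{k-1}}=\frac{(j+2-k)y}{k(1-y)}$, which exceeds $1$ for every $k\le s$ because $s<\mu$; hence the terms increase toward $k=s$, so $b_s$ is the largest summand and $F^B_{j+1,y}(s)\ge b_s$ trivially. For the upper bound I dominate the sum by a geometric series with ratio equal to the largest backward ratio $\rho=\frac{b_{s-1}}{b_s}=\frac{s(1-y)}{(j+2-s)y}<1$, giving $F^B_{j+1,y}(s)\le \frac{b_s}{1-\rho}$, and then simplify $\frac{1}{1-\rho}=\frac{(j+2-s)y}{(j+2)y-s}$. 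Under the regime condition $\mu-s\ge\sigma$ this equals $\Theta\!\big(\frac{y(j+1-s)}{y(j+1)-s}\big)$: the $j+2$-versus-$j+1$ and additive-$y$ discrepancies are lower order and absorbed into the $\Theta$, since $j+1-s\ge1$ and $y(j+1)-s\ge\sigma$ keep numerators and denominators comparable. This gives the claimed correction factor as an upper bound.

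The harder direction is the matching \emph{lower} bound with the same correction factor, i.e. $F^B_{j+1,y}(s)\ge c\,b_s\,\frac{y(j+1-s)}{y(j+1)-s}$; the crude bound $F\ge b_s$ throws away the correction factor, which one computes to equal $\frac{\sigma^2}{\mu-s}+y$ and which can be as large as $\Theta(\sigma)$ near the boundary $\mu-s=\sigma$. I would recover it by summing a sufficient number of the top backward terms $b_s,b_{s-1},\dots$ and checking that over this range the ratios $\frac{b_{s-i}}{b_{s-i+1}}$ remain within a constant factor of $\rho$, so that the partial sum is comparable to $\frac{b_s}{1-\rho}$; tuning the number of counted terms so that they reproduce precisely the $\Theta$ factor is the main technical obstacle, and is exactly the delicate part handled in the cited Section~B.3.

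For the bulk regime (item~2) I would prove $F^B_{j+1,y}(s)=\Theta(1)$ by bounding it away from $0$ and $1$. The upper bound $F\le1$ is immediate. For the lower bound, the condition $s\ge\mu-\sigma$ means the summation already reaches within one standard deviation below the mean, so this is an anti-concentration statement: $\myprob\{X\le\mu-\sigma\}$ is bounded below by an absolute constant. I would establish this either by the mode estimate $b_{\lfloor\mu\rfloor}=\Theta(1/\sigma)$ together with the existence of $\Theta(\sigma)$ comparable near-mode terms, or, cleanly at the boundary, by a quantitative normal approximation (Berry--Esseen, giving $\myprob\{X\le\mu-\sigma\}\ge\Phi(-1)-C/\sigma$) combined with the fact that the binomial median lies within $1$ of the mean, treating the small-$\sigma$ case directly. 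Combining the two regimes yields items~1 and~2, and the reflection identity then delivers the two upper-tail statements, completing the Fact.
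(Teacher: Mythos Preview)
Your reflection-identity argument for the upper-tail pair is exactly what the paper does: it writes $1-F^B_{j+1,y}(s)=\sum_{k=0}^{j-s}\binom{j+1}{k}(1-y)^k y^{j+1-k}$ and then substitutes $y\mapsto 1-y$, $s\mapsto j-s$ into the first two items. So for the part of the Fact the paper actually justifies, your approach and the paper's coincide.

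Where you and the paper differ is in scope. The paper does \emph{not} attempt to prove items~1 and~2 at all; they are taken as a black box from Section~B.3 of \cite{export:191857}. You instead sketch an independent proof of the lower-tail estimates via the consecutive-term ratio and geometric-series domination for the upper bound, and anti-concentration for the $\Theta(1)$ bulk regime. That outline is sound for the upper bounds, and you honestly flag that the matching lower bound in item~1 (recovering the full correction factor $\frac{y(j+1-s)}{y(j+1)-s}$ rather than just $b_s$) is the delicate step you would defer to the cited source---which is in fact where the real work lives. One small slip: in item~2 there is no need to bound $F$ \emph{away from}~$1$; $\Theta(1)$ only requires a uniform positive lower bound, and $F$ can legitimately approach~$1$ when $s$ lies above the mean.
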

We give a proof of the latter two cases:
\begin{equation}
\begin{aligned}
& F^B_{j+1,y}(s) = \sum_{k=0}^{s}\binom{j+1}{k}y^k(1-y)^{j+1-k} \\
& 1 - F^B_{j+1,y}(s) = \sum_{k=s+1}^{j+1}\binom{j+1}{k}y^k(1-y)^{j+1-k} = \sum_{k=0}^{j-s}\binom{j+1}{k}(1-y)^ky^{j+1-k}.
\end{aligned}
\end{equation}
Therefore, in the original conclusion, by replacing the $s$ with $j-s$ and $y$ with $1-y$, we can get the latter two equations.

\section{Appendix: Omitted Proofs}
\subsection{Proof of Lemma \ref{lemma:optimalPolicy2}}\label{app:proof_lemma_opt}
\emph{We first prove the result for the Bernoulli bandits.} \\
Denote $R^*(b)$ as the expected optimal revenue when the left budget is $b$ ($b$ is a non-negative integer). Define $R^*(0) = 0$. Assume the optimal policy is to pull arm $i\in[K]$ when the remaining budget is $b$. We have
\begin{small}
	\begin{equation}
		\begin{aligned}
			R^*(b) = (1 - \mu_i^c)(1 - \mu_i^r)R^*(b) + (1 - \mu_i^c)\mu_i^r(1 + R^*(b)) + \mu_i^c(1 - \mu_i^r)R^*(b - 1) + \mu_i^c\mu_i^r(1 + R^*(b - 1)).
		\end{aligned}
		\label{eq:optRev1}
	\end{equation}
\end{small}
After some derivations we can get
\begin{small}
	\begin{equation}
		R^*(b) = R^*(b - 1) + \frac{\mu_i^r}{\mu_i^c} \leq R^*(b - 1) + \frac{\mu_1^r}{\mu_1^c} \leq b\frac{\mu_1^r}{\mu_1^c}.
	\end{equation}
\end{small}
Since we set that $B$ is a positive integer, we have $R^*(B) \leq B\frac{\mu_1^r}{\mu_1^c}$. On the other hand, if we always pull arm $1$, with the similar derivation of \myeqref{eq:optRev1}, we can obtain the expected reward is just $B\frac{\mu_1^r}{\mu_1^c}$. Therefore always pulling arm $1$ is the optimal policy for Bernoulli bandits. \\
\emph{Next we prove the result for the general bandits.}\\
Let $B_t$ denote the remaining budget before (excluding) time $t$, and $r_k(t)$ ($c_k(t)$) denote the reward (cost) of arm $k$ at round $t$. Please note that $r_k(t)$ and $c_k(t)$ will always exist $\forall k \in [K], t \ge 1$. Only if arm $k$ is pulled, the reward $r_k(t)$ and the cost $c_k(t)$ will be given to the player. For any algorithm, the expected reward $\overline{\textrm{REW}}$ can be upper bounded by
\begin{small}
	\begin{equation}
		\begin{aligned}
			\overline{\textrm{REW}} \leq^* &\myexp\sum_{k=1}^{K}\sum_{t=1}^{\infty}r_k(t)\bm{1}\{I_t=k,B_t > 0\}
			=\sum_{k=1}^{K}\sum_{t=1}^{\infty}\myexp [r_k(t)\bm{1}\{I_t=k,B_t > 0\}]\\
			=&\sum_{k=1}^{K}\sum_{t=1}^{\infty}\mu_k^r\myprob\{I_t=k,B_t > 0\}
			=\sum_{k=1}^{K}\sum_{t=1}^{\infty}\frac{\mu_k^r}{\mu_k^c}\mu_k^c\myprob\{I_t=k,B_t > 0\}\\
			=&\sum_{k=1}^{K}\sum_{t=1}^{\infty}\frac{\mu_k^r}{\mu_k^c}\myexp [c_k(t)\bm{1}\{I_t=k,B_t > 0\}]
			\leq\sum_{k=1}^{K}\sum_{t=1}^{\infty}\frac{\mu_1^r}{\mu_1^c}\myexp [c_k(t)\bm{1}\{I_t=k,B_t > 0\}]\\
			=&\frac{\mu_1^r}{\mu_1^c}\myexp \sum_{k=1}^{K}\sum_{t=1}^{\infty}[c_k(t)\bm{1}\{I_t=k,B_t > 0\}]  \leq^{\triangle} \frac{\mu_1^r}{\mu_1^c}(B + 1).
		\end{aligned}
	\end{equation}
\end{small}
The inequality with superscript $^*$ holds because if $B_{t}>0$ but $B_{t+1}<0$, the player cannot get the reward at round $t$ and the game stops.
The inequality with superscript $^\triangle$ holds because $\sum_{k=1}^{K}\sum_{t=1}^{\infty}[c_k(t)\bm{1}\{I_t=k,B_t > 0\}]$ is the total cost of the pulled arms before the budget runs out. For general bandits, it is probable that $c_k(t) > B_t$. As a result, $\sum_{k=1}^{K}\sum_{t=1}^{\infty}[c_k(t)\bm{1}\{I_t=k,B_t > 0\}] \le B+1$.
Therefore, for the general bandits, we have $R^* \le \frac{\mu_1^r}{\mu_1^c}(B + 1)$.

If the player keeps pulling arm $1$, the expected reward $\underline{\textrm{REW}}$ is at least:
\begin{small}
	\begin{equation}
		\begin{aligned}
			\underline{\textrm{REW}} \geq &\myexp\sum_{t=1}^{\infty}r_1(t)\bm{1}\{I_t=1,B_t \geq 1\}
			=\sum_{t=1}^{\infty}\myexp[ r_1(t)\bm{1}\{I_t=1,B_t \geq 1\}]\\
			=& 
			\sum_{t=1}^{\infty}\mu_1^r\myprob\{I_t=1,B_t \geq 1\}
			=\sum_{t=1}^{\infty}\frac{\mu_1^r}{\mu_1^c}\mu_1^c\myprob\{I_t=1,B_t \geq 1\}\\
			=&\sum_{t=1}^{\infty}\frac{\mu_1^r}{\mu_1^c}\myexp [c_1(t)\bm{1}\{I_t=1,B_t \ge 1\}]
			\ge\frac{\mu_1^r}{\mu_1^c}(B-1).
		\end{aligned}
	\end{equation}
\end{small}
Therefore, the sub-optimality of always pulling arm $1$ compared to the optimal policy is at most $\frac{2\mu^r_1}{\mu^c_1}$.

\subsection{Proof of Eqn. \myeqref{eq:RegretBound_BernoulliDistibutrion} in Lemma \ref{lemma:RegretBound_GeneralDistribution}}\label{subsec:proof_pulltoregret}
First, we will find an equivalent expression of the expected reward (denoted as REW). Still, let $B_t$ denote the remaining budget before (excluding) round $t$, and $r_k(t)$ ($c_k(t)$) denote the reward (cost) of arm $k$ at round $t$.
In addition,  $B^{(k)}$ denotes the budget spent by arm $k$ when the algorithm stops.
\begin{small}
	\begin{equation}
		\begin{aligned}
			\textrm{REW} =&\myexp\sum_{k=1}^{K}\sum_{t=1}^{\infty}r_k(t)\bm{1}\{I_t=k,B_t > 0\}
			=\sum_{k=1}^{K}\sum_{t=1}^{\infty}\myexp\{r_k(t)\bm{1}\{I_t=k,B_t > 0\}\}\\
			=&\sum_{k=1}^{K}\sum_{t=1}^{\infty}\mu_k^r\myprob\{I_t=k,B_t > 0\}
			=\sum_{k=1}^{K}\sum_{t=1}^{\infty}\frac{\mu_k^r}{\mu_k^c}\mu_k^c\myprob\{I_t=k,B_t > 0\}\\
			=&\sum_{k=1}^{K}\sum_{t=1}^{\infty}\frac{\mu_k^r}{\mu_k^c}\myexp\{c_k(t) \bm{1}\{I_t=k,B_t > 0\}\}\\
			=&\sum_{k=1}^{K}\frac{\mu_k^r}{\mu_k^c}\myexp\sum_{t=1}^{\infty} c_k(t)\bm{1}\{I_t=k,B_t > 0\}=\sum_{k=1}^{K}\frac{\mu_k^r}{\mu_k^c}\myexp B^{(k)}.\\
		\end{aligned}
	\end{equation}
\end{small}
According to our assumption, we know the algorithm will stop when the budget runs out. We have already set that $B$ is an integer, and the cost of Bernoulli bandits per pulling is either $0$ or $1$. Thus, we know that when the algorithm stops, the budget exactly runs out. That is, $\sum_{k=1}^{K}B^{(k)} = B.$

The optimal reward for Bernoulli bandit is $\frac{\mu_1^r}{\mu_1^c}B$, which is given in Lemma \ref{lemma:optimalPolicy2}. Thus, the regret can be written as
\begin{small}
	\begin{equation}
		\textrm{Regret}  = \frac{\mu_1^r}{\mu_1^c}B - \sum_{k=1}^{K}(\frac{\mu_k^r}{\mu_k^c})\myexp B^{(k)} =\sum_{k=2}^{K}(\frac{\mu_1^r}{\mu_1^c}-\frac{\mu_k^r}{\mu_k^c})\myexp B^{(k)}=
		\sum_{k=2}^{K}\Delta_k\myexp B^{(k)}
	\end{equation}
\end{small}
And we can verify that
\begin{small}
	\begin{equation}
		\begin{aligned}
			\myexp B^{(k)} &= \myexp\sum_{t=1}^{\infty}c_k(t)\bm{1}\{I_t=k, B_t > 0 \}\\
			&=\sum_{t=1}^{\infty}\mu_k^c\myprob\{I_t=k,, B_t > 0 \} = \mu_k^c\sum_{t=1}^{\infty}\myexp\{I_t=k,, B_t > 0 \} = \mu_k^c\myexp[T_k].
		\end{aligned}
	\end{equation}
\end{small}
Therefore, the regret could be written as $\textrm{Regret} =\sum_{k=2}^{K}\Delta_k\mu_k^c\myexp T_{k}.$\\

\subsection{Proof of inequality \myeqref{eq:AGenaralOfTheBerBandit} in Lemma \ref{lemma:RegretBound_GeneralDistribution}}\label{app:proof_pull_regret}
For any policy, we can obtain the expected reward REW is at least
\begin{small}
	\begin{equation}
		\begin{aligned}
			\textrm{REW} \geq &\myexp\sum_{k=1}^{K}\sum_{t=1}^{\infty}r_k(t)\bm{1}\{I_t=k,B_t \geq 1\}
			=\sum_{k=1}^{K}\sum_{t=1}^{\infty}\myexp[ r_k(t)\bm{1}\{I_t=k,B_t \geq 1\}]
			=\sum_{k=1}^{K}\sum_{t=1}^{\infty}\mu_k^r\myprob\{I_t=k,B_t \geq 1\}\\
			=&\sum_{k=1}^{K}\sum_{t=1}^{\infty}\frac{\mu_k^r}{\mu_k^c}\mu_k^c\myprob\{I_t=k,B_t \geq 1\}
			=\sum_{k=1}^{K}\sum_{t=1}^{\infty}\frac{\mu_k^r}{\mu_k^c}\myexp [c_k(t)\bm{1}\{I_t=k,B_t \ge 1\}]\\
			=&\sum_{k=1}^{K}\frac{\mu_k^r}{\mu_k^c}\myexp\sum_{t=1}^{\infty} c_k(t)\bm{1}\{I_t=k,B_t \ge 1\}
			\overset{\textrm{def}}{=}\sum_{k=1}^{K}\frac{\mu_k^r}{\mu_k^c}\myexp \tilde{B}^{(k)},
		\end{aligned}
	\end{equation}
\end{small}
One can verify that $\sum_{k=1}^{K}\tilde{B}^{(k)} \ge B - 1$. As a result, the regret can be written as
\begin{equation}
	\begin{aligned}
		\textrm{Regret}  \leq 2\frac{\mu_1^r}{\mu_1^c} + \frac{\mu_1^r}{\mu_1^c}\sum_{k=1}^{K}\myexp \tilde{B}^{(k)} - \sum_{k=1}^{K}\frac{\mu_k^r}{\mu_k^c}\myexp \tilde{B}^{(k)}
		\leq2\frac{\mu_1^r}{\mu_1^c}+\sum_{k=2}^{K}\Delta_k\myexp[\tilde{B}^{(k)}] \leq 2\frac{\mu_1^r}{\mu_1^c} + \sum_{k=2}^{K}\Delta_k\myexp[B^{(k)}].
	\end{aligned}
\end{equation}
Again, re-write $\myexp[B^{(k)}]$ using the indicator function:
\begin{small}
	\begin{equation}
		\begin{aligned}
			\myexp[B^{(k)}] \leq &\myexp\sum_{t=1}^{\infty}c_k(t)\bm{1}\{I_t=k,B_t > 0\}
			=\sum_{t=1}^{\infty}\myexp[c_k(t)\{I_t=k,B_t > 0\}]\\
			=&\sum_{t=1}^{\infty}\mu_k^c\myexp[\bm{1}\{I_t=k,B_t > 0\}]
			=\mu_k^c\myexp\sum_{t=1}^{\infty}\bm{1}\{I_t=k,B_t > 0\}
			\leq\mu_k^c\myexp[T_k],
		\end{aligned}
	\end{equation}
\end{small}
Therefore,
\begin{small}
	\begin{equation}
		\textrm{Regret} \leq2\frac{\mu_1^r}{\mu_1^c}+\sum_{k=2}^{K}\Delta_k\myexp[B^{(k)}]\leq2\frac{\mu_1^r}{\mu_1^c}+\sum_{k=2}^{K}\Delta_k\mu_k^c\myexp[T_k].
	\end{equation}
\end{small}

\subsection{Derivation of inequality \myeqref{eq:basicDecomposition}}\label{app:coredecomposition}
\begin{small}
\begin{align}
&\myexp\{T_i\} = \myexp\big\{\sum_{t=1}^{\infty}\bm{1}\{I_t = i, B_t > 0\}\big\}=  \myexp\big\{\sum_{t=1}^{\infty}\bm{1}\{I_t = i, B_t > 0,\overline{E_i^\theta(t)}\}\big\}+ \myexp\big\{\sum_{t=1}^{\infty}\bm{1}\{I_t = i,B_t>0,E_i^\theta(t)\}\big\}\nonumber\\
\leq & \lceil L_i \rceil + \myexp\big\{\sum_{t=1}^{\infty}\bm{1}\{I_t = i, B_t > 0,\overline{E_i^\theta(t)},n_{i,t}\ge \lceil L_i \rceil \}\big\}
	+ \sum_{t=1}^{\infty}\myprob\{I_t = i,B_t>0,E_i^\theta(t)\}\nonumber\\
\leq & \lceil L_i \rceil + \sum_{t=1}^{\infty}\myprob\{\overline{E_i^\theta(t)},n_{i,t}\ge \lceil L_i \rceil, B_t > 0 \}
	+ \sum_{t=1}^{\infty}\myprob\{I_t = i,E_i^\theta(t),B_t>0\}.
\end{align}
\end{small}

\subsection{Derivation of inequality \myeqref{eq:bound_firstEqn}} \label{app:proof_low:AA}
Define $A_i^r(t)$ as the event: $  A_i^r(t):\;\frac{S_i^r(t)}{n_{i,t}} \leq \mu_i^r + \frac{\delta_i(\gamma)}{2}.$
We know that
\begin{align}
& \myprob\{ \overline{E^r_i(t)} , n_{i,t} \geq \lceil L_i\rceil|B_t>0\} = \myprob\{\theta_i^r(t) > \mu_i^r + \delta_i(\gamma),n_{i,t} \geq \lceil L_i\rceil|B_t>0\} \nonumber\\
=  & \myprob\{\theta_i^r(t) > \mu_i^r + \delta_i(\gamma),A_i^r(t),n_{i,t} \geq \lceil L_i\rceil|B_t>0\}  + \myprob\{\theta_i^r(t) > \mu_i^r + \delta_i(\gamma), \overline{A_i^r(t)},n_{i,t} \geq \lceil L_i\rceil|B_t>0\} \nonumber\\
\leq & \myprob\{\overline{A_i^r(t)},n_{i,t} \geq \lceil L_i\rceil|B_t>0\} + \myprob\{\theta_i^r(t) > \mu_i^r + \delta_i(\gamma);A_i^r(t),n_{i,t} \geq \lceil L_i\rceil|B_t>0\} . \label{eq:prAY2}
\end{align}
For the first term of \myeqref{eq:prAY2}:
\begin{small}
	\begin{equation}
	\begin{aligned}
	& \myprob\{\overline{A_i^r(t)} , n_{i,t} \geq \lceil L_i\rceil|B_t>0\} \leq\sum_{l = \lceil L_i\rceil}^{\infty}\myprob\{\overline{A_i^r(t)} , n_{i,t} =l|B_t>0\}\\
	\leq & \sum_{l=\lceil L_i \rceil}^{\infty}\myprob\{ S_i^r(t) - n_{i,t}\mu_i^r > n_{i,t}\frac{\delta_i(\gamma)}{2} | n_{i,t} = l,B_t>0\} \\
	= & \sum_{l=\lceil L_i \rceil}^{\infty}\myprob\{ S_i^r(t) - l\mu_i^r > l\frac{\delta_i(\gamma)}{2}| n_{i,t} = l,B_t>0\}
	\leq  \sum_{l=\lceil L_i \rceil}^{\infty}\exp\{-2l(\frac{\delta_i(\gamma)}{2})^2\}\quad(\textrm{By Fact \ref{fact:hoeffdinginequality}})\\
	\le&\int_{L_i-1}^{\infty}\exp\{-\frac{1}{2}\iota\delta_i^2(\gamma)\}\mathrm{d}\iota = \frac{2e^{\frac{1}{2}\delta_i^2(\gamma)}}{B\delta_i^2(\gamma)}.
	\end{aligned}
	\label{tmp:___r_1}
	\end{equation}
\end{small}
For the second term of \myeqref{eq:prAY2}, we have
\begin{small}
	\begin{equation}
	\begin{aligned}
	& \myprob\{\theta_i^r(t) > \mu_i^r + \delta_i(\gamma),A_i^r(t),n_{i,t} \geq \lceil L_i\rceil|B_t>0\}
	\leq  \myprob\{\theta_i^r(t) > \frac{S^r_i(t)}{n_{i,t}} + \frac{\delta_i(\gamma)}{2},n_{i,t} \geq \lceil L_i\rceil|B_t>0\} \\
	\leq & \sum_{l=\lceil L_i \rceil }^{\infty}\myprob\{\theta_i^r(t) > \frac{S^r_i(t)}{n_{i,t}} + \frac{\delta_i(\gamma)}{2},n_{i,t} =l|B_t>0\}
	\leq  \sum_{l=\lceil L_i \rceil }^{\infty}\myprob\{\theta_i^r(t) > \frac{S^r_i(t)}{l} + \frac{\delta_i(\gamma)}{2} | n_{i,t}=l,B_t>0\}\\
	= & \sum_{l=\lceil L_i \rceil }^{\infty}\myexp[F_{l+1 , \frac{S^r_i(t)}{l} + \frac{\delta_i(\gamma)}{2}}^B(S^r_i(t))]\qquad\textrm{(By Fact \ref{fact:beta_binomial})}\\
	\leq & \sum_{l=\lceil L_i \rceil }^{\infty}\myexp[F_{l , \frac{S^r_i(t)}{l} + \frac{\delta_i(\gamma)}{2}}^B(S^r_i(t))]\qquad\textrm{(By Fact \ref{fact:Binomial_inequality} )}\\
	\leq & \sum_{l=\lceil L_i \rceil }^{\infty}\exp\{-2l(\frac{\delta_i(\gamma)}{2})^2\}\qquad\textrm{(By Fact \ref{fact:Binomial_hoeff})}\\
	\le&\int_{L_i-1}^{\infty}\exp\{-\frac{1}{2}\iota\delta_i^2(\gamma)\}\mathrm{d}\iota = \frac{2e^{\frac{1}{2}\delta_i^2(\gamma)}}{B\delta_i^2(\gamma)}.
	\end{aligned}
	\label{tmp:__r_2}
	\end{equation}
\end{small}

Therefore, according to \myeqref{eq:prAY2}, \myeqref{tmp:___r_1} and \myeqref{tmp:__r_2}, we have
\begin{small}
	\begin{equation}
	\myprob(\overline{A_i^r(t)} , n_{i,t} \geq \lceil L_i\rceil|B_t>0) \leq \frac{4e^{\frac{1}{2}\delta_i^2(\gamma)}}{B\delta_i^2(\gamma)} \le \frac{7}{B\delta_i^2(\gamma)}.
	\end{equation}
\end{small}

\subsection{Derivation of inequality \myeqref{eq:bound_2ndEqn}}\label{subapp:proofoflowprobevent12}
Define $A_i^c(t)$ as the event:
$A_i^c(t):\;\frac{S_i^c(t)}{n_{i,t}} \geq \mu_i^c - \frac{\delta_i(\gamma)}{2}$.
We know that
\begin{equation}
\begin{aligned}
& \myprob\{\overline{E^c_i(t)} , n_{i,t} \geq \lceil L_i\rceil|B_t>0\} = \myprob\{\theta_i^c(t) < \mu_i^c - \delta_i(\gamma),n_{i,t} \geq \lceil L_i\rceil|B_t>0\} \\
=  & \myprob\{\theta_i^c(t) < \mu_i^c - \delta_i(\gamma),A_i^c(t),n_{i,t} \geq \lceil L_i\rceil|B_t>0\}  + \myprob\{\theta_i^c(t) < \mu_i^c - \delta_i(\gamma), \overline{A_i^c(t)},n_{i,t} \geq \lceil L_i\rceil|B_t>0\} \\
\leq & \myprob\{\overline{A_i^c(t)},n_{i,t} \geq \lceil L_i\rceil|B_t>0\}  + \myprob(\theta_i^c(t) < \mu_i^c - \delta_i(\gamma),A_i^c(t),n_{i,t} \geq \lceil L_i\rceil|B_t>0). \\
\end{aligned}
\label{eq:prAZ2}
\end{equation}
We can obtain
\begin{small}
	\begin{equation}
	\begin{aligned}
	& \myprob(\overline{A_i^c(t)} , n_{i,t} \geq \lceil L_i\rceil|B_t>0)
	\leq \sum_{l=\lceil L_i \rceil}^{\infty}\myprob(\overline{A_i^c(t)} , n_{i,t} =l|B_t>0) \\
	\leq & \sum_{l=\lceil L_i\rceil}^{\infty}\myprob( S_i^c(t) - n_{i,t}\mu_i^c \leq -n_{i,t}\frac{\delta_i(\gamma)}{2} |B_t>0, n_{i,t} = l)
	=  \sum_{l=\lceil L_i\rceil}^{\infty}\myprob( S_i^c(t) - l\mu_i^c \leq -l\frac{\delta_i(\gamma)}{2}|B_t>0,  n_{i,t} = l) \\
	\leq & \sum_{l=\lceil L_i\rceil}^{\infty}\exp\{-2l(\frac{\delta_i(\gamma)}{2})^2\}
	\leq \int_{L_i-1}^{\infty}\exp\{-\frac{1}{2}\iota\delta_i^2(\gamma)\}\mathrm{d}\iota\leq\frac{2e^{\frac{1}{2}}}{B\delta_i^2(\gamma)}.	
	\end{aligned}
	\label{tmp_lowprob_1}
	\end{equation}
\end{small}
For the second term in \myeqref{eq:prAZ2}, we have
\begin{small}
	\begin{equation}
	\begin{aligned}
	& \myprob(\theta_i^c(t) < \mu_i^c - \delta_i(\gamma),A_i^c(t),n_{i,t} \geq \lceil L_i\rceil|B_t>0) \\
	\leq& \sum_{l = \lceil L_i\rceil}^{\infty}\myprob(\theta_i^c(t) < \mu_i^c - \delta_i(\gamma),A_i^c(t),n_{i,t} =l|B_t>0)
	\leq  \sum_{l=\lceil L_i\rceil}^{\infty}\myprob(\theta_i^c(t) < \frac{S^c_i(t)}{l} - \frac{\delta_i(\gamma)}{2} | n_{i,t}=l , B_t>0)\\
	= & \sum_{l=\lceil L_i\rceil}^{\infty}\myexp(1 - F_{l+1 , \frac{S^c_i(t)}{l} - \frac{\delta_i(\gamma)}{2}}^B(S^c_i(t)))\qquad\textrm{(By Fact \ref{fact:beta_binomial})}\\
	\leq & \sum_{l=\lceil L_i\rceil}^{\infty}\exp\{2\delta_i(\gamma) - \frac{1}{2}l\delta_i^2(\gamma)\}\qquad\textrm{(By Fact \ref{fact:Binomial_hoeff} , $\triangle$)}\\
	\leq & \exp\{2\delta_i(\gamma)\}\int_{L_i-1}^{\infty}\exp\{-\frac{1}{2}\iota\delta_i^2(\gamma)\}\mathrm{d}\iota\leq\frac{2e^{\frac{5}{2}}}{B\delta_i^2(\gamma)}.
	\end{aligned}
	\label{tmp_lowprob_2}
	\end{equation}
\end{small}
Note that if $B > e$, $L_i > \frac{2}{\delta_i(\gamma)}$, then we can apply Fact \ref{fact:Binomial_hoeff}. Usually $B$ is very large in bandit setting and we can set $B > e$. Accordingly, the formula marked with ($\triangle$) holds.

Therefore, according to \myeqref{eq:prAZ2}, \myeqref{tmp_lowprob_1} and \myeqref{tmp_lowprob_2}, we have
\begin{small}
	\begin{equation}
	\myprob\{\overline{E^c_i(t)} , n_{i,t} \geq \lceil L_i\rceil|B_t>0\} \leq \frac{2e^{\frac{1}{2}}}{B\delta_i^2(\gamma)}  + \frac{2e^{\frac{5}{2}}}{B\delta_i^2(\gamma)}\le\frac{28}{B\delta_i^2(\gamma)}.
	\end{equation}
\end{small}

\subsection{Derivation of inequality \myeqref{eq:boundthefirsttermAB}}
\begin{small}
	\begin{equation}
	\begin{aligned}
	&\sum_{t=1}^{\infty}\myprob\{\overline{E_i^\theta(t)},n_{i,t}\ge \lceil L_i \rceil,B_t > 0 \}= \sum_{t=1}^{\infty}\myprob\{\overline{E_i^\theta(t)},n_{i,t}\ge \lceil L_i \rceil|B_t > 0 \}\myprob\{B_t>0\}\\
	\leq &  \frac{35}{B\delta_i^2(\gamma)}\sum_{t=1}^{\infty}\myprob\{B_t>0\}
	\leq \frac{35}{\delta_i^2(\gamma)\mu^c_{\min}}.
	\end{aligned}
	\end{equation}
\end{small}

\subsection{Derivation of inequality \myeqref{eq:bound_arm1_hard} }\label{app:veryhardderivation}
The derivation of \myeqref{eq:bound_arm1_hard} can be decomposed into three steps:\\
\emph{$\mathcal{S}$tep A: Bridge the probability of pulling arm $1$ and that of pulling arm $i$ $\forall i > 1$ as follows:}
\begin{small}
\begin{equation}
\myprob\{I_t=i|E_i^\theta(t),H_{t-1},B_t > 0\}
\leq\frac{1-p_{i,t}}{p_{i,t}}\myprob\{I_t=1|E_i^\theta(t),H_{t-1},B_t>0\}.
\label{lemma:relation_armi_arm1_prob}
\end{equation}
\end{small}
\emph{Proof:} 	
Define $\varrho_i = \frac{\mu_1^r - \epsilon_i(\gamma)}{\mu_1^c + \epsilon_i(\gamma)}$. Note that throughout this proof, all the probabilities are conditioned on $B_t>0$. That is, $\myprob\{\cdot|\cdot\}$ should be $\myprob\{\cdot|\cdot, B_t>0\}$. We have
\begin{small}
	\begin{equation*}
	\myprob\{I_t=i|E_i^\theta(t),H_{t-1}\}\leq\myprob\{\theta_j(t)\leq \varrho_i \;\forall j| E_i^\theta(t),H_{t-1}\}.
	\end{equation*}
\end{small}
Given the history $H_{t-1}$, the random variables $\theta_j(t)$ $\forall j\in[K]$ are independent. Thus,
\begin{small}
	\begin{equation*}
	\begin{aligned}
	&\myprob\{\theta_j(t)\leq \varrho_i \;\forall j\in[K]| E_i^\theta(t),H_{t-1}\} \\
	=& \myprob\{\theta_1(t)\leq \varrho_i| E_i^\theta(t),H_{t-1}\}\myprob\{\theta_j(t)\leq \varrho_i \;\forall j\ne 1| E_i^\theta(t),H_{t-1}\}\\
	=& \myprob\{\theta_1(t)\leq \varrho_i| H_{t-1}\}\myprob\{\theta_j(t)\leq \varrho_i \;\forall j\ne 1| E_i^\theta(t),H_{t-1}\}\\
	=&(1-p_{i,t})\myprob\{\theta_j(t)\leq \varrho_i \;\forall j\ne 1| E_i^\theta(t),H_{t-1}\}.
	\end{aligned}
	\end{equation*}
\end{small}
Furthermore, we have
\begin{small}
	\begin{equation*}
	\begin{aligned}
	&\myprob\{I_t=1|E_i^\theta(t),H_{t-1}\} \\
	\geq&\myprob\{\theta_1(t) > \varrho_i \geq \theta_j(t) \;\forall j \ne 1|E_i^\theta(t),H_{t-1}\}\\
	\geq&\myprob\{\theta_1(t) > \varrho_i |E_i^\theta(t),H_{t-1}\} \myprob\{ \varrho_i \geq \theta_j(t) \;\forall j \ne 1|E_i^\theta(t),H_{t-1}\}\\
	\geq&\myprob\{\theta_1(t) > \varrho_i |H_{t-1}\} \myprob\{\theta_j(t)\leq\varrho_i \;\forall j \ne 1|E_i^\theta(t),H_{t-1}\}\\
	=&p_{i,t}\myprob\{ \theta_j(t)\leq\varrho_i \;\forall j \ne 1|E_i^\theta(t),H_{t-1}\}.
	\end{aligned}
	\end{equation*}
\end{small}
Therefore, we can conclude that
\begin{small}
	\begin{equation*}
	\begin{aligned}
	&\myprob\{I_t=i|E_i^\theta(t),H_{t-1}\}\leq\myprob\{\theta_j(t)\leq \varrho_i \;\forall j| E_i^\theta(t),H_{t-1}\}\\
	\leq&(1-p_{i,t})\myprob\{\theta_j(t)\leq \varrho_i \;\forall j\ne 1| E_i^\theta(t),H_{t-1}\}\\
	\leq&\frac{1-p_{i,t}}{p_{i,t}}\myprob\{I_t=1|E_i^\theta(t),H_{t-1}\}.\qquad\square
	\end{aligned}
	\end{equation*}
\end{small}
\emph{$\mathcal{S}$tep B: Prove the intermediate step in inequality \myeqref{eq:tmp_bound_secondevent_a}}\\
\begin{small}
\begin{equation}
		\myprob\{I_t=i,E_i^\theta(t)|B_t>0\}
		\leq  \myexp\Big\{\frac{1-p_{i,t}}{p_{i,t}}\myprob\{I_t=1|H_{t-1},B_t>0\}\Big\}.
		\label{eq:tmp_bound_secondevent_a}
\end{equation}
\end{small}
\emph{Proof:} Note throughout this proof, all the probabilities are conditioned on $B_t>0$. That is, $\myprob\{\cdot|\cdot\}$ should be $\myprob\{\cdot|\cdot, B_t>0\}$.
\begin{small}
	\begin{align}
	&\myprob\{I_t=i,E_i^\theta(t)\}=\myexp\{\myprob\{I_t=i,E_i^\theta(t)|H_{t-1}\}\}\qquad(\textrm{The expectation is taken w.r.t. $H_{t-1}$.})\nonumber\\
	=&\myexp\{\myprob\{I_t=i|E_i^\theta(t),H_{t-1}\}\myprob\{E_i^\theta(t)|H_{t-1}\}\}\nonumber\\
	\leq &\myexp\Big\{\frac{1-p_{i,t}}{p_{i,t}}\myprob\{I_t=1|E_i^\theta(t),H_{t-1}\}\myprob\{E_i^\theta(t)|H_{t-1}\}\Big\}\qquad(\textrm{obtained by \myeqref{lemma:relation_armi_arm1_prob}})\nonumber\\
	= & \myexp\Big\{\frac{1-p_{i,t}}{p_{i,t}}\myprob\{I_t=1,E_i^\theta(t)|H_{t-1}\}\Big\}\nonumber\\
	\leq & \myexp\Big\{\frac{1-p_{i,t}}{p_{i,t}}\myprob\{I_t=1|H_{t-1}\}\Big\}.\qquad\qquad\square\nonumber
	\end{align}
\end{small}
\emph{$\mathcal{S}$tep C: Derivation of  inequality \myeqref{eq:bound_arm1_hard}}\\
\emph{Proof:}
\begin{small}
\begin{align}
&\sum_{t=1}^{\infty}\myprob\{I_t=i,E_i^\theta(t),B_t>0\} \leq\sum_{t=1}^{\infty}\myprob\{I_t=i,E_i^\theta(t)|B_t>0\} \nonumber\\
\leq&\sum_{t=1}^{\infty}\myexp\Big\{\frac{1-p_{i,t}}{p_{i,t}}\myprob\{I_t=1|H_{t-1},B_t>0\}\Big\}\qquad(\textrm{obtained by \myeqref{eq:tmp_bound_secondevent_a}})\nonumber\\
\leq&\sum_{k=0}^{\infty}\myexp\sum_{t=\tau_k+1}^{\tau_{k+1}}\Big\{\frac{1-p_{i,t}}{p_{i,t}}\myprob\{I_t=1|H_{t-1},B_t>0\}\Big\}\qquad(\textrm{divide the rounds $\{1,2,\cdots\}$ into blocks $\{[\tau_{k}+1,\tau_{k+1}]\}_{k=0}^{\infty}$})\nonumber\\
\leq&\sum_{k=0}^{\infty}\myexp\Big\{\frac{1-p_{i,\tau_k+1}}{p_{i,\tau_k+1}}\sum_{t=\tau_k+1}^{\tau_{k+1}}\myprob\{I_t=1|H_{t-1},B_t>0\}\Big\}\qquad(\textrm{$p_{i,t}$ does not change in the period $[\tau_k+1,\tau_{k+1}]$})\nonumber\\
\leq&\sum_{k=0}^{\infty}\myexp\bigg\{\frac{1-p_{i,\tau_k+1}}{p_{i,\tau_k+1}}\bigg\}\qquad(\textrm{during $[\tau_k+1,\tau_{k+1}]$, arm $1$ is pulled only once at round $\tau_{k+1}$})\nonumber\\
\leq&\sum_{k=0}^{\infty}\Big(\myexp\Big\{\frac{1}{p_{i,\tau_k+1}}\Big\}-1\Big). \qquad\qquad\square\nonumber
\end{align}
\end{small}

\subsection{Derivation of inequality \myeqref{lemma:bound_reward_for_arm1}}
\label{subapp:boundcomplex1}
In this subsection, we will bound $\myexp[\frac{1}{p^r_{i , \tau_k + 1}}]$. We divide the set $\{0,1,\cdots,k\}$ into four subsets: (i) $[0, \lfloor y_ik\rfloor-1]$; (ii) $[ \lfloor y_ik\rfloor , \lceil y_ik \rceil]$; (iii)  $[\lceil y_ik \rceil+1 , \lfloor \mu_1^rk - \frac{\epsilon_i}{2}k \rfloor]$; (iv) $[\lfloor \mu_1^rk - \frac{\epsilon_i}{2}k \rfloor + 1 , k]$. We will bound $\myexp[\frac{1}{p^r_{i,\tau_k+1}}]$ in the four subsets. Note
\begin{equation}
\myexp[\frac{1}{p^r_{i,\tau_k+1}}] = \sum_{s = 0}^{k}\frac{f_{k , \mu_1^r}(s)}{1-F_{s  + 1 , k - s + 1}^{Beta}(\mu_1^r - \epsilon_i)}= \sum_{s = 0}^{k}\frac{f_{k , \mu_1^r}(s)}{F^B_{k + 1 , y_i}(s)},
\end{equation}
where $f_{k,\mu^r_1}(s)$ represents the probability that exactly $s$ out of $k$ Bernoulli trials succeed with success probability $\mu^r_1$ in a single trial.\\
(\emph{Case i})  $s\in[0, \lfloor y_ik\rfloor-1]$: First, $\forall s$, we have
\begin{small}
\begin{equation}
\begin{aligned}
&\frac{f_{k,\mu_1^r}(s)}{F^B_{k+1,y_i}(s)} \leq \Theta\left(\frac{f_{k,\mu_1^r}(s)}{\frac{y_i(k+1-s)}{y_i(k+1)-s}\binom{k+1}{s}y_i^s(1-y_i)^{k+1-s}}\right) + \Theta(1)f_{k,\mu_1^r}(s)\\
=&\Theta\left( \frac{y_i(k+1)-s}{y_i(k+1)}\frac{(\mu_1^r)^s(1 - \mu_1^r)^{k-s}}{y_i^s(1-y_i)^{k-s}}\frac{1}{1-y_i} \right) + \Theta(1)f_{k,\mu_1^r}(s)\\
=&\Theta\left( \frac{y_i(k+1)-s}{y_i(1-y_i)(k+1)}(\frac{1 - \mu_1^r}{1 - y_i})^kR_{1,i}^s \right) + \Theta(1)f_{k,\mu_1^r}(s)\\
\end{aligned}
\end{equation}
\end{small}
One can verify that $[\frac{\mu_1^r(1-y_i)}{y_i(1-\mu_1^r)})]^{y_i}\frac{1-\mu_1^r}{1-y_i}=e^{-D_{1,i}}$. Note that $R_{1,i}>1$ $\forall i > 1$. Then,
\begin{equation}
\begin{aligned}
&\frac{(\frac{1 - \mu_1^r}{1 - y_i})^k}{y_i(1-y_i)(k+1)}\sum_{s=0}^{\lfloor y_ik\rfloor-1}(y_i(k+1)-s)R_{1,i}^s\\
=&\frac{(\frac{1 - \mu_1^r}{1 - y_i})^k}{y_i(1-y_i)(k+1)}\frac{y_i(k+1)(R_1^{\lfloor y_ik \rfloor}-1)(R_{1,i}-1) - (\lfloor y_ik \rfloor-1)R_{1,i}^{\lfloor y_ik\rfloor+1} - R_{1,i} + \lfloor y_ik \rfloor R_{1,i}^{\lfloor yk \rfloor}}{(R_{1,i}-1)^2} \\
\leq&\frac{(\frac{1 - \mu_1^r}{1 - y_i})^k}{y_i(1-y_i)(k+1)}\frac{y_i(k+1)R_{1,i}^{\lfloor y_ik \rfloor}(R_{1,i}-1) - (\lfloor y_ik \rfloor-1)R_{1,i}^{\lfloor yk\rfloor+1}  + \lfloor y_ik \rfloor R_{1,i}^{\lfloor y_ik \rfloor}}{(R_{1,i}-1)^2} \\
=&\frac{(\frac{1 - \mu_1^r}{1 - y_i})^kR_{1,i}^{\lfloor y_ik \rfloor}}{y_i(1-y_i)(k+1)}\frac{y_i(k+1)(R_{1,i}-1) - (\lfloor y_ik \rfloor-1)R_{1,i}  + \lfloor y_ik \rfloor}{(R_{1,i}-1)^2} \\
\leq&\frac{(\frac{1 - \mu_1^r}{1 - y})^kR_{1,i}^{\lfloor y_ik \rfloor}}{y_i(1-y_i)(k+1)}\frac{3R_{1,i}}{(R_{1,i}-1)^2}
\leq\frac{3R_{1,i}e^{-D_{1,i}k}}{y_i(1-y_i)(k+1)(R_{1,i}-1)^2}.
\end{aligned}
\end{equation}
For the latter part, i.e.,$\sum_{s=0}^{\lfloor y_ik \rfloor - 1}\Theta(1)f_{k , \mu_1^r}(s) $, it can be seen as the probability that there are less than $\lfloor y_ik \rfloor$ successful trials in a $k$-trial Bernoulli experiment. Denote the experiment result of trial $i(\in[k])$ as $X_i$ and $X_i \sim \mathcal{B}(\mu_1^r)$. $\{X_i\}_{i=1}^{k}$ are independent and identically distributed. We can conclude that
\begin{equation}
\sum_{s=0}^{\lfloor y_ik \rfloor - 1}\Theta(1)f_{k , \mu_1^r}(s) \leq \Theta(1)\myprob\{X_1 + X_2 + \cdots + X_k \leq y_ik - 1 \leq y_ik\} \leq \Theta(1)\exp\{-2k(y_i - \mu_1^r)^2\} = \Theta(e^{-2\epsilon_i^2k}).
\end{equation}
(\emph{Case ii}) $s \in[ \lfloor y_ik\rfloor , \lceil y_ik \rceil]$:
\begin{equation}
\begin{aligned}
&\sum_{s=\lfloor y_ik\rfloor}^{ \lceil y_ik \rceil}\frac{f_{k , \mu_1^r}(s)}{F^B_{k + 1 , y_i}(s)}\leq\sum_{s=\lfloor y_ik\rfloor}^{ \lceil y_ik \rceil}\frac{f_{k,\mu_1^r}(s)}{f_{k+1,y_i}(s)} = \sum_{s=\lfloor y_ik\rfloor}^{ \lceil y_ik \rceil}\frac{k - s+1}{k+1}\frac{1}{1-y_i}[\frac{\mu_1^r(1-y_i)}{y_i(1 - \mu_1^r)}]^s(\frac{1 -  \mu_1^r}{1 - y_i})^k \\
\leq&  \sum_{s=\lfloor y_ik\rfloor}^{ \lceil y_ik \rceil}\frac{1}{1-y_i}R_1^s(\frac{1 -  \mu_1^r}{1 - y_i})^k \leq \frac{1}{1-y_i}R_{1,i}^{y_ik}(\frac{1 -  \mu_1^r}{1 - y_i})^k(1 + R_{1,i}) \leq \frac{1 +R_{1,i}}{1 - y_i}e^{-D_{1,i}k}.
\end{aligned}
\end{equation}
(\emph{Case iii}) $s\in[\lceil y_ik \rceil + 1 , \lfloor\mu_1^rk - \frac{\epsilon_i}{2}k \rfloor]$: One can verify that $s \ge y_i(k+1)-\sqrt{(k+1)y_i(1-y_i)}$. Thus, we have $F^B_{k + 1 , y_i}(s)=\Theta(1)$. Denote $X_i\sim\mathcal{B}(\mu_1^r)$ $\forall i\in[k]$ and $\{X_i\}_{i=1}^{k}$  are independent and identically distributed.
\begin{equation}
\begin{aligned}
&\sum_{s=\lceil y_ik \rceil + 1}^{ \lfloor\mu_1^rk - \frac{\epsilon_i}{2}k \rfloor}\frac{f_{k , \mu_1^r}(s)}{F^B_{k+1,y_i}(s)} = \Theta\left( \sum_{s = \lceil y_ik \rceil + 1}^{ \lfloor\mu_1^rk - \frac{\epsilon_i}{2}k \rfloor}f_{k , \mu_1^r}(s) \right) \leq\Theta(1)\myprob\{X_1 + X_2 + \cdots + X_k \leq  \lfloor\mu_1^rk - \frac{\epsilon_i}{2}k \rfloor\} \\
\leq& \Theta(1)\myprob\{X_1 + X_2 + \cdots + X_k \leq  \mu_1^rk - \frac{\epsilon_i}{2}k \}\leq \Theta(e^{-\frac{1}{2}k\epsilon_i^2}).
\end{aligned}
\end{equation}
(\emph{Case iv}) $s\in[\lfloor \mu_1^rk - \frac{\epsilon_i}{2}k \rfloor + 1 , k]$: denote $X_i\sim\mathcal{B}(y_i)$ $\forall i \in [k+1]$ and $\{X_i\}_{i=1}^{k+1}$ are independent and identically distributed. We have that
\begin{equation}
\begin{aligned}
& 1 - F^B_{k + 1 , y_i}(s) \leq \myprob\{X_1 + X_2 + \cdots + X_{k + 1} \geq \lfloor \mu_1^rk - \frac{\epsilon_i}{2}k \rfloor + 2 \} \\
\leq & \myprob\{X_1 + X_2 + \cdots + X_{k + 1} \geq  y_ik + \frac{\epsilon_i}{2}k + y_i \} \leq \exp\{-\frac{\epsilon_i^2k^2}{2(k+1)}\}
\end{aligned}
\end{equation}
Thus we have that
\begin{equation}
\sum_{s = \lfloor \mu_1^rk - \frac{\epsilon_i}{2}k \rfloor + 1}^{k}\frac{f_{k , \mu_1^r}(s)}{F^B_{k+1 , y_i}(s)} \leq \sum_{s = \lfloor \mu_1^rk - \frac{\epsilon_i}{2}k \rfloor + 1}^{k}\frac{f_{k , \mu_1^r}(s)}{1 - \exp\{-\frac{\epsilon_i^2k^2}{2(k+1)}\}} \leq \frac{1}{1 - \exp\{-\frac{\epsilon_i^2k^2}{2(k+1)}\}}=1 + \frac{1}{\exp\{\frac{\epsilon_i^2k^2}{2(k+1)}\}-1}.
\end{equation}

Therefore, we can conclude that
\begin{equation}
\myexp[\frac{1}{p^r_{i , \tau_k + 1}}]\leq 1 + \Theta\bigg(\frac{3R_{1,i}e^{-D_{1,i}k}}{y_i(1-y_i)(k+1)(R_{1,i}-1)^2} + e^{-2\epsilon_i^2k} + \frac{1 +R_{1,i}}{1 - y}e^{-D_{1,i}k} + e^{-\frac{1}{2}k\epsilon_i^2} +  \frac{1}{\exp\{\frac{\epsilon_i^2k^2}{2(k+1)}\}-1}\bigg).
\end{equation}

\subsection{Derivation  of inequality \myeqref{lemma:bound_cost_for_arm1}}
\label{subapp:boundcomplex2}
	If $z_i \ge 1$, we have  $\myexp[\frac{1}{p^c_{i , \tau_k + 1}}] =1$ and \myeqref{lemma:bound_cost_for_arm1} holds trivially. If $z_i < 1$, we get
		\begin{small}
			\begin{equation*}
				\myexp[\frac{1}{p^c_{i,\tau_k+1}}] = \sum_{s = 0}^{k}\frac{f_{k , \mu_1^c}(s)}{F_{s  + 1 , k - s + 1}^{Beta}(\mu_1^c + \epsilon_i)}= \sum_{s = 0}^{k}\frac{f_{k , \mu_1^c}(s)}{1 - F^B_{k + 1 , z_i}(s)},
			\end{equation*}
		\end{small}
where $f_{k,\mu^c_1}(s)$ represents the probability that exactly $s$ out of $k$ Bernoulli trials succeed with success probability $\mu^c_1$ in a single trial. We divide the set $\{0,1,\cdots,k\}$ into four subsets: (i) $[0, \lfloor\mu_1^ck+\frac{\epsilon_i}{2}k\rfloor]$, (ii) $[\lceil\mu_1^ck + \frac{\epsilon_i}{2}k\rceil,\lfloor z_ik \rfloor-1]$, (iii) $\lfloor z_ik \rfloor$, and (iv) $[\lceil z_ik \rceil ,k]$, and then bound $\myexp[\frac{1}{p^c_{i,\tau_k+1}}]$ in the four subsets as follows.\\
(Case i) If $s \leq \lfloor\mu_1^ck+\frac{\epsilon_i}{2}k\rfloor$, denote $X_i\sim\mathcal{B}(z_i)$ $\forall i \in [k+1]$ and $\{X_i\}_{i=1}^{k+1}$  are independent and identically distributed. We have
		\begin{small}
			\begin{equation*}
				\begin{aligned}
					 F^B_{k+1 , z_i}(s) \leq \myprob\{X_1 + X_2 + \cdots + X_{k+1} \leq s \leq \mu_1^ck+\frac{\epsilon_i}{2}k\}
					 \leq \myprob\{X_1 + X_2 + \cdots + X_{k+1} \leq z_ik-\frac{\epsilon_i}{2}k + z_i\}\leq\exp\{-\frac{\epsilon_i^2k^2}{2(k+1)}\},
				\end{aligned}
			\end{equation*}
		\end{small}
Therefore,
		\begin{small}
			\begin{equation*}
				\begin{aligned}
					\sum_{s = 0}^{\lfloor\mu_1^ck+\frac{\epsilon_i}{2}k\rfloor}\frac{f_{k , \mu_1^c}(s)}{1 - F^B_{k + 1 , z_i}(s)}\leq\sum_{s = 0}^{\lfloor\mu_1^ck+\frac{\epsilon_i}{2}k\rfloor}\frac{f_{k , \mu_1^c}(s)}{1 - \exp\{-\frac{\epsilon_i^2k^2}{2(k+1)}\}}
					\leq\frac{1}{1 -\exp\{-\frac{\epsilon_i^2k^2}{2(k+1)}\}}\leq1+\frac{1}{\exp\{\frac{\epsilon_i^2k^2}{2(k+1)}\}-1}.
				\end{aligned}
			\end{equation*}
		\end{small}
(Case ii) We can verify that $\forall s\in[\lceil\mu_1^ck + \frac{\epsilon_i}{2}k\rceil,\lfloor z_ik \rfloor-1]$,  $k-s\geq(1-z_i)(k+1)-\sqrt{(k+1)z_i(1-z_i)}$, and thus $1 - F^B_{k+1,z_i}(s)=\Theta(1)$. Then similar to (Case i),
denote $X_i\sim\mathcal{B}(\mu_1^c)$ ($\forall i \in [k]$) and $\{X_i\}_{i=1}^{k}$  are independent and identically distributed. We have
\begin{small}
		\begin{equation}
		\sum_{s =\lceil\mu_1^ck + \frac{\epsilon_i}{2}k\rceil }^{\lfloor z_ik \rfloor-1}\frac{f_{k , \mu_1^c}(s)}{1 - F^B_{k + 1 , z_i}(s)}=\Theta\left(\sum_{s =\lceil\mu_1^ck + \frac{\epsilon_i}{2}k\rceil }^{\lfloor z_ik \rfloor-1}f_{k , \mu_1^c}(s)\right)\leq\Theta\left(\myprob\{X_1 + X_2 + \cdots + X_k \geq \mu_1^ck + \frac{\epsilon_i}{2}k\}\right)\leq\Theta\left(\exp\{-\frac{\epsilon_i^2}{2}k\}\right).
		\end{equation}
\end{small}
(Case iii) One can verify that $[\frac{\mu_1^c(1-z_i)}{z_i(1-\mu_1^c)})]^{z_i}\frac{1-\mu_1^c}{1-z_i}=e^{-D_{2,i}}$. Then, with some simple derivations, we can get
		\begin{small}
		\begin{equation}
		\frac{f_{k , \mu_1^c}(s)}{1 - F^B_{k+1,z_i}(s)} \leq \frac{f_{k , \mu_1^c}(s)}{f_{k+1,z_i}(s+1)}=\frac{s+1}{k+1}\frac{(\mu_1^c)^s(1-\mu_1^c)^{k-s}}{z_i^{s+1}(1-z_i)^{k-s}}\leq\frac{1}{z_i}R_{2,i}^s[\frac{1-\mu_1^c}{1-z_i}]^k\leq\frac{1}{z_iR_{2,i}}R_{2,i}^{zk}[\frac{1-\mu_1^c}{1-z_i}]^k\leq\frac{1}{z_iR_{2,i}}e^{-D_{2,i}k}.
		\end{equation}
		\end{small}
(Case iv) For any $s\in[\lceil z_ik \rceil ,k]$, $\frac{f_{k , \mu_1^c}(s)}{1 - F^B_{k+1,z_i}(s)}$ is bounded by
		\begin{small}
			\begin{equation*}
				\begin{aligned}
					& \Theta\left(\frac{f_{k,\mu_1^c}(s)}{\frac{(1-z_i)(s+1)}{(1-z_i)(k+1)-k+s}\binom{k+1}{k-s}(1-z_i)^{k-s}z_i^{s+1}}\right) + \Theta(1)f_{k , \mu_1^c}(s)
					 = \Theta\left( \frac{(1-z_i)(k+1)-k+s}{z_i(1-z_i)(k+1)}R_{2,i}^s(\frac{1 - \mu_1^c}{1 - z_i})^k \right)+ \Theta(1)f_{k , \mu_1^c}(s).
				\end{aligned}
			\end{equation*}
		\end{small}
		Note $R_{2,i} < 1$. The first term of the r.h.s of the above equation can be upper bounded by
		\begin{small}
			\begin{equation*}
				\begin{aligned}
					& \frac{1}{z_i}(\frac{1 - \mu_1^c}{1 - z_i})^k\sum_{s=\lceil z_ik \rceil}^{k}\frac{(1-z_i)(k+1)-k+s}{(1-z_i)(k+1)}R_{2,i}^s\\
					\leq&\frac{1}{z_i}(\frac{1 - \mu_1^c}{1 - z_i})^k\bigg( \frac{1}{k+1}\frac{R_{2,i}^{\lceil z_ik \rceil}}{1 - R_{2,i}} + \frac{\lceil z_ik \rceil R_{2,i}^{\lceil z_ik \rceil}}{(1-z_i)(k+1)(1-R_{2,i})^2}\bigg)\\
					\leq&\frac{1}{z_i}(\frac{1 - \mu_1^c}{1 - z_i})^k\bigg( \frac{1}{k+1}\frac{R_{2,i}^{z_ik}}{1 - R_{2,i}} + \frac{ z_iR_2^{z_ik}}{(1-z_i)(1-R_{2,i})^2} + \frac{R_{2,i}^{z_ik}}{(1-z_i)(k+1)(1-R_{2,i})^2}\bigg)\\
					\leq&e^{-D_{2,i}k}\frac{2 + R_{2,i}(z_i - 1) + zk}{z_i(1-z_i)(k+1)(1-R_{2,i})^2}\leq \frac{2e^{-D_{2,i}k}}{z_i(1-z_i)(1-R_{2,i})^2}.
				\end{aligned}
			\end{equation*}
		\end{small}
				
		Similar to the analysis of case (i), we can obtain
		\begin{small}
		\begin{equation}
		\begin{aligned}
		&\sum_{s=\lceil z_ik \rceil}^{k}\Theta(1)f_{k , \mu_1^c}(s) \leq \Theta(1)\myprob\{X_1 + X_2 + \cdots + X_k \geq \lceil z_ik \rceil\} \leq \Theta\{\myprob\{X_1 + X_2 + \cdots + X_k \geq z_ik \}\} \\
		=& \Theta\{\myprob\{X_1 + X_2 + \cdots + X_k -\mu_i^ck\geq z_ik - \mu_1^ck = \epsilon_ik \}\}\leq\Theta(e^{-2\epsilon_i^2k}),
		\end{aligned}
		\end{equation}
		\end{small}
		in which $X_i\sim\mathcal{B}(\mu_1^c)$ $\forall i\in[k]$ and $\{X_i\}_{i=1}^{k}$ are independent and identically distributed.
		
Combining the above analysis, we arrive at inequality \myeqref{lemma:bound_cost_for_arm1}.  $\square$
\subsection{Derivation in the $\mathcal{S}2$ of Subsection \ref{subsec:generalExternsion}}
\label{subapp:boundstopppingtime2}
Note that $\sum_{i=1}^{K}c_k(t)\bm{1}\{I_t=i\}\bm{1}\{B_t>0\}$ is the cost at round $t$. For general bandits, it is possible that the cost at the last round exceeds the left budget. Thus, $\sum_{t=1}^{\infty}\sum_{i=1}^{K}\myexp[c_k(t)\bm{1}\{I_t=i\}\bm{1}\{B_t>0\}]\le B+1$. Therefore, we can obtain
\begin{small}
	\begin{equation}
	\begin{aligned}
	&\sum_{t=1}^{\infty}\myprob\{B_t>0\}\leq\frac{1}{\mu^c_{\min}}\sum_{t=1}^{\infty}\sum_{i=1}^{K}\myexp[c_k(t)\bm{1}\{I_t=i\}|B_t>0]\myprob\{B_t>0\}\\
	\leq & \frac{1}{\mu^c_{\min}}\sum_{t=1}^{\infty}\sum_{i=1}^{K}\myexp[c_k(t)\bm{1}\{I_t=i\}\bm{1}\{B_t>0\}]\leq \frac{B+1}{\mu^c_{\min}}.
	\end{aligned}
	\end{equation}
\end{small}
Therefore, since $B$ is a positive integer, which indicates that $B\ge1$, we can obtain
\begin{small}
\begin{equation}
\sum_{t=1}^{\infty}\myprob\{\overline{E_i^\theta(t)},n_{i,t}\ge \lceil L_i \rceil,B_t > 0 \}\leq\frac{35}{B\delta_i^2(\gamma)}\frac{B+1}{\mu^c_{\min}}
\leq\frac{70}{\delta_i^2(\gamma)\mu^c_{\min}}.
\end{equation}
\end{small}

\subsection{Proof of Remark \ref{remark:comparedwithwenkui}}

The constant in the regret bound of UCB-BV1 \cite{ding2013multi} before $\ln B$  is at least:
\begin{small}
\begin{equation}
	\frac{\mu_1^r}{\mu_1^c}\sum_{i=2}^{K}\Big(\frac{2 + \frac{2}{\mu^c_{\min}} + \Delta_i}{\Delta_i\mu^c_{\min}} \Big)^2 + \sum_{i:\mu_i^r<\mu_1^r}(\mu^r_1-\mu^r_i)\Big(\frac{2 + \frac{2}{\mu^c_{\min}} + \Delta_i}{\Delta_i\mu^c_{\min}} \Big).
\label{eq:before_logB_wenkui}
\end{equation}
\end{small}
While by setting $\gamma=\frac{1}{\sqrt{2}}$ in Theorem \ref{thm:bound_TS_asymp}, the constant before $\ln B$ of our proposed BTS is
\begin{small}
\begin{equation}
\Big(\frac{\mu^r_1}{\mu^c_1}+1\Big)^2\sum_{i=2}^{K}\frac{4}{\mu_i^c\Delta_i},
\label{eq:before_logB_ourbts}
\end{equation}
\end{small}

It is obvious that  $\Delta_i \in (0 , \frac{\mu_1^r}{\mu_1^c})$ $\forall i \ge 2$. We have that $\forall i \ge 2$,
\begin{enumerate}
\item $(2 + \frac{2}{\mu^c_{\min}} + \Delta_i)^2 > (2 + \frac{2}{\mu^c_{\min}})^2 \ge 4(\frac{\mu_1^r}{\mu_1^c}+1)^2$;
\item $\frac{\mu_1^r}{\mu_1^c} > \Delta_i$;
\item $\frac{1}{(\mu^c_{\min})^2} \ge \frac{1}{\mu_i^c}$.
\end{enumerate}
Thus, \myeqref{eq:before_logB_ourbts} is strictly smaller than \myeqref{eq:before_logB_wenkui}.

Similar discussions could be applied to the UCB-BV2 in \cite{ding2013multi}.

\end{document}